\newtheorem{prop}{Proposition}
\newtheorem{theorem}{Theorem}
\newtheorem{lemma}{Lemma}
\newtheorem{remark}{Remark}
\newcommand{\tx}{\mathrm}
\definecolor{RED}{rgb}{1,0,0}\definecolor{BLUE}{rgb}{0,0,1} 
\lstdefinelanguage{DIFcode}{ 
  moredelim=[il][\color{red}\sout]{\%DIF\ <\ }, 
  moredelim=[il][\color{blue}]{\%DIF\ >\ } 
} 
\lstdefinestyle{DIFverbatimstyle}{ 
	language=DIFcode, 
	basicstyle=\ttfamily, 
	columns=fullflexible, 
	keepspaces=true 
} 
\begin{document}

\title{Efficient optimization-based trajectory planning}

\author{Jiayu Fan,
        Nikolce Murgovski,
        Jun Liang
\thanks{This work is supported by the National Key Research and Development Program of China under Grant 2019YFB1600500, the Chalmers Area of Advance Transport, project EFFECT, and the China Scholarship Council under Grant 202206320304. }%
\thanks{Jiayu Fan and Jun Liang are with the College of Control Science and Engineering, Zhejiang University, Hangzhou 310027, China (e-mail: jiayu.fanhsz@gmail.com; jliang@zju.edu.cn).}%
\thanks{Nikolce Murgovski is with the Department of Electrical Engineering, Chalmers University of Technology, 41296 G\"oteborg, Sweden (e-mail: nikolce.murgovski@chalmers.se).}
\thanks{*Corresponding author (e-mail:jliang@zju.edu.cn).}%
}

\maketitle

\begin{abstract}
This research addresses the increasing demand for advanced navigation systems capable of operating within confined surroundings. A significant challenge in this field is developing an efficient planning framework that can generalize across various types of collision avoidance missions. Utilizing numerical optimal control techniques, this study proposes a unified optimization-based planning framework to meet these demands. We focus on handling two collision avoidance problems, i.e., the object not colliding with obstacles and not colliding with boundaries of the constrained region. The object or obstacle is denoted as a union of convex polytopes and ellipsoids, and the constrained region is denoted as an intersection of such convex sets. Using these representations, collision avoidance can be approached by formulating explicit constraints that separate two convex sets, or ensure that a convex set is contained in another convex set, referred to as separating constraints and containing constraints, respectively. We propose to use the hyperplane separation theorem to formulate differentiable separating constraints, and utilize the S-procedure and geometrical methods to formulate smooth containing constraints. We state that compared to the state of the art, the proposed formulations allow a considerable reduction in nonlinear program size and geometry-based initialization in auxiliary variables used to formulate collision avoidance constraints. Finally, the efficacy of the proposed unified planning framework is evaluated in two contexts, autonomous parking in tractor-trailer vehicles and overtaking on curved lanes. The results in both cases exhibit an improved computational performance compared to existing methods.
\end{abstract}

\begin{IEEEkeywords} 
Optimization-based planning, trajectory planning, efficient collision avoidance,  nonlinear optimization, hyperplane separation theorem, S-procedure.
\end{IEEEkeywords}

\section{Introduction}\label{sec1}
\IEEEPARstart{T}{he} optimization-based planning method holds immense potential in planning motions for robotic systems since it allows to naturally include system dynamics as well as further constraints in formulations. It is known that the optimal control problem (OCP) serves as the basis for model predictive control (MPC). Since MPC requires repeated solutions of the OCP, the computational efficiency strongly depends on the capability of solving the OCP. Thus, for readability, the planning problem is often formulated as an OCP \cite{ref1,ref2,ref3}. This is often followed by transcription of the OCP to a nonlinear program (NLP), which is then solved by robust numerical algorithms using, e.g., off-the-shelf solvers.

When utilizing an optimization-based method to plan trajectories for a controlled object, the most crucial concern revolves around implementing effective collision avoidance, requiring the object not to collide with obstacles and environment boundaries.  Typically, obstacles take the form of nonconvex shapes and controlled objects themselves can be either convex or nonconvex shapes \cite{ref61, ref4, ref62}. Furthermore, given that most controlled objects operate within constrained environments, e.g., vessel docking in limited regions \cite{ref6} or car parking in tight parking lots \cite{ref7}, planning problems involving exact collision avoidance become inherently complex. In general, the planning problems are recognized as NP-hard \cite{ref8, ref9, ref82, ref83, ref63}, and the challenges intensify when controlled objects are expected to accurately and efficiently navigate nonconvex obstacles within confined surroundings. Since central to the optimization method is obtaining a computationally efficient NLP solution, the construction of smooth (at least twice differentiable) cost function and constraints \cite{ref3, ref4, ref6} play a pivotal role. This research focuses on the investigation of explicit and differentiable collision avoidance constraints in fully dimensional domains. 

When it comes to constructing collision avoidance constraints for controlled objects, several typical formulations have been developed. A natural choice is the disjunctive programming method \cite{ref10, ref11}. This method can explicitly model the non-convexity in the OCP and reformulate the OCP as a mixed-integer optimization problem. Earlier implementations have typically used the faces of the obstacles themselves to define convex safe regions. The requirement that a point is outside all obstacles is transformed to the requirement that, for each obstacle, the point must be on the outside of at least one face of that obstacle \cite{ref12, ref13}. For convex obstacles, these conditions are equivalent and have been successfully used to encode obstacle avoidance as a mixed-integer linear program \cite{ref14}. Researchers have also investigated a nonconvex feasible region the vehicle resides in. An active research topic in this area is the efficient description of the nonconvex regions. Preliminary results show that these regions are often characterized by exploiting hyperplane arrangements (interested readers can refer to complementary materials in \cite{ref15, ref16}).  For example, the authors in \cite{ref17, ref18} present a partitioning approach through equally-sized square cells, and then using this approach, they show that the collision avoidance problem becomes a resource allocation problem. Apart from these, in papers \cite{ref19, ref20} binary variables combined with the Big-M method \cite{ref21} are employed to avoid collisions in autonomous overtaking scenarios. However, the mixed-integer formulation requires allocating multiple binary variables to collision constraints, which is computationally expensive \cite{ref22, ref23}. Various measures have been proposed to mitigate this issue. The authors in  \cite{ref10} propose to use hyperplane arrangements associated with binary variables for constraint description in a multi-obstacle environment. An over-approximation region is characterized by reducing to strictly binary formulations at the price of increasing conservatism. To accelerate the computation, the authors in \cite{ref20,ref24,ref25} put forward several approaches to reduce the number of binary variables in their respective case studies. However, finding a general approach has proven difficult. Another approach relies on the relaxation of mixed-integer constraints \cite{ref26,ref27}, but these relaxation strategies may sacrifice the solution's accuracy. 

In many applications, approximation methods typically resort to approximating nonconvex or convex shapes with simplistic shapes like ellipsoids and spheres \cite{ref28, ref29, ref30, ref31}. For instance, a controlled object and an obstacle with polyhedral shapes can be over-approximated by two ellipsoids. Since the distance functions between two ellipsoids have analytical expressions, collision avoidance can be easily achieved by requiring nonnegative distance. However, it can be noted that the movable region where a controlled object resides is reduced by this type of approximate method, so this way often leads to conservative solutions and may result in deadlock maneuvers \cite{ref32}, thus hindering controlled objects from finding feasible and collision-free trajectories in confined environments. Specifically, checking vertices of polygonal objects is also a practical way to formulate collision constraints. The idea is that a pair of convex polygons are not intersecting when their vertices are kept outside each other. Following this idea, the authors in \cite{ref110, ref111, ref112} impose collision constraints on vertices of the vehicle through triangle-area-based criterion in tight parking maneuver. It is noted that the constraints formulated on the triangle-area-based criterion are generally non-smooth and non-differentiable \cite{ref112}, leading to solutions at such points with inferior numerical properties. 

Additionally, the research on collision avoidance between two convex shapes is emphasized since nonconvex shapes can be decomposed into a union of convex shapes \cite{ref3, ref6, ref8, ref64}. However, the decomposition is highly problem-independent and should be carefully selected to optimize computational performance in whatever algorithms that model is used. The effective decomposition ways refer to the reported methods in \cite{ref100}.
Following the decomposition idea, the authors in \cite{ref3} first investigate the nonnegative distance between two convex sets, and then impose differentiable collision constraints by reformulating the distance via dualization techniques. Furthermore, to approach exact collision avoidance between two polytopes, the authors in \cite{ref6} introduce two indicator constraints by exploiting Farkas’ Lemma. Since, for many applications, a measure of the severity of collisions is of interest, the previous distance can be replaced with a signed distance between two convex sets, which can provide a measure of penetration depth in case of collision \cite{ref3, ref4, ref6, ref33, ref60}. Through the strong duality of convex optimization, the requirement of a nonnegative signed distance can be exactly reformulated as smooth collision avoidance constraints. This approach has been applied to various kinds of applications spanning from robot avoidance maneuver \cite{ref34}, vessels docking \cite{ref35}, to autonomous car parking \cite{ref22}. However, this type of dual reformulation between a pair of convex sets, whether using strong duality or Farkas’ Lemma, needs adding quite a few dual variables and constraints, depending on the complexity of convex sets \cite{ref3, ref6}. The resulting growth in problem size can be computationally expensive.

In this study, we propose a comprehensive optimization framework for accurately and efficiently navigating a controlled object in a confined high-dimensional region with obstacles. We model the object or obstacle as a union of convex sets and the constrained region is denoted as an intersection of convex sets. Utilizing the defined representations, the challenges of collision avoidance in the context of ensuring that the controlled object avoids collisions with both obstacles and the boundaries of the constrained region can be reformulated as problems related to the separation of two convex sets and the containment of one convex set within another. In light of the requirement for smooth collision avoidance constraints within the optimization methods, we employ the hyperplane separation theorem to address the first separation problem, while the second containment problem is approached through the application of the S-procedure and geometrical methods. Furthermore, we detailed the delineation of explicit and differentiable collision avoidance constraints, which can be effectively resolved using readily available solvers through gradient- and Hessian-based algorithms. In summary, the contributions of this research can be summarized as follows:
\begin{enumerate}
\item{We define a unified optimization-based planning framework for collision-free planning tasks regarding the controlled objects and obstacles represented by a union of convex polytopes and ellipsoids, and the constrained region represented by an intersection of such convex sets.}
\item{We propose smooth methods that ensure a convex polytope or ellipsoid is contained inside another convex polytope or ellipsoid using S-procedure and geometrical methods. The smoothness property allows the use of generic solvers with gradient- and Hessian-based algorithms.}
\item{We introduce efficient collision avoidance formulations by utilizing the hyperplane separation theorem to separate two convex sets, which admits a considerable reduction in the OCP problem size compared to the existing distance reformulations using dual methods.}
\item{We propose novel initialization of auxiliary variables used to formulate collision avoidance constraints by primal methods. The primal methods can provide a direct geometric interpretation of the auxiliary variables, which is not obvious when using the existing dual methods.}
\end{enumerate}

The methods most closely related to ours are the work of \cite{ref1} and \cite{ref33}, where the authors represent convex shapes using a vertex representation \cite{ref1} and a support function representation \cite{ref33}. Their collision avoidance formulations also show an improved computational performance compared to the dual reformulations. Our formulations differ from \cite{ref1} and \cite{ref33} in that we present a more considerable reduction in problem size, i.e., adding fewer auxiliary variables and fewer constraints. Furthermore, rather than utilizing the distance between objects as \cite{ref1} and \cite{ref33} do, our formulations using the hyperplane separation theorem directly find hyperplanes separating the objects, and allow finding good initial guesses in auxiliary variables. Additionally, we also study the problem regarding how to formulate smooth constraints under the condition of a convex set being a subset of a convex set.

The remainder of this paper is organized as follows. Section \ref{sec2} formulates a unified OCP problem to describe the optimization-based planning framework. In Section \ref{sec3}, notations to be utilized are defined. Then in Section \ref{sec4} separation constraints between a pair of convex sets are introduced using the hyperplane separation theorem. In Section \ref{sec5}, containing constraints are present to ensure a convex set inside another convex set. Section \ref{sec6} illustrates the advantages in reducing problem size and well-initializing auxiliary variables compared to the state of the art. Section \ref{sec7} introduces two applications of the proposed optimization-based planning framework and states the efficacy of the proposed methods compared to existing methods in each application. Finally, Section \ref{sec8} provides the concluding remarks and future plans.

\section{Problem statement}\label{sec2}
In this paper, trajectory planning problems are studied for a controlled object that navigates obstacles within a constrained region.  The system motion is modeled as
\begin{equation}\label{eq1}
\dot{\boldsymbol{\xi}} = f\left(\boldsymbol{\xi},\boldsymbol{u}\right), \ \boldsymbol{\xi}(0)=\boldsymbol{\xi}_{\tx{init}},
\end{equation}
where $\boldsymbol{\xi}$ is the state vector, $\boldsymbol{\xi}_{\tx{init}}$ is the initial state, $\boldsymbol{u}$ is the control input and $f$ is the system dynamics.

The constrained region (canvas) is modeled as an intersection of convex sets, denoted as $\mathcal W$, and the controlled object and obstacles are represented by a union of convex sets\footnote{In the commonly used scenarios, such as car parking \cite{ref3, ref32} and vessel docking \cite{ref35} cases, most nonconvex object can be decomposed into a union of convex polytopes and ellipsoids, so in this study, we focus on convex polytopes and ellipsoids. }, denoted as $\mathcal B$ and $\mathcal O$, respectively. The constrained OCP under study is planning the motion of the object, represented by the set $\mathcal B$, from its initial state $\boldsymbol{\xi_{\tx{init}}}$ to a terminal state $\boldsymbol{\xi_{\tx{final}}}$ while always residing within the constrained region $\mathcal W$, and not colliding with obstacles $\mathcal O$. The continuous-time constrained OCP is formulated as 
\begin{subequations}\label{eq2}
\begin{align}
\underset {\boldsymbol{\xi}, \boldsymbol{u}}{\text{min}} \ & \int_0^{t_\tx{f}}  
 \ell\left(\boldsymbol{\xi}(t), \boldsymbol{u}(t)\right)\tx{d}t  \label{eq2a}\\
\text {s.t.}  \ & \boldsymbol{\xi}(0)=\boldsymbol{\xi_{\tx{init}}},\ \boldsymbol{\xi}(t_\tx{f})=\boldsymbol{\xi_{\tx{final}}}, \label{eq2b}\\
 & \dot{\boldsymbol{\xi}}(t) = f\left(\boldsymbol{\xi}(t),\boldsymbol{u}(t)\right), \label{eq2c}\\
 & \boldsymbol{\xi}(t) \in \mathcal{X},\ \boldsymbol{u}(t) \in \mathcal{U}, \label{eq2d}\\
 & \mathcal B(\boldsymbol{\xi}) \subset \mathcal W, \label{eq2e} \\ 
 & \mathcal B(\boldsymbol{\xi}) \cap \mathcal O=\emptyset, \label{eq2f}
\end{align}
\end{subequations}
where $t_\tx{f}$ is the final time, and $\mathcal{X}$ and $\mathcal{U}$ are the admissible sets of state and control, respectively. The function $ \ell$ represents the time-dependent cost. Constraints (\ref{eq2c})-(\ref{eq2f}) are imposed for $\forall t\in \left[0, t_\tx{f}\right]$. For the sake of clarity, the time dependence of $\mathcal O$ is omitted, but we remark that the proposed methods can directly be applied to problems with moving obstacles.  Constraints (\ref{eq2e}) are imposed to prevent the controlled object from colliding with the constrained region boundaries (faces), referred to as containing constraints, and constraints (\ref{eq2f}) enforces collision avoidance between the object and obstacles, referred to separating constraints. Note that if the problem (\ref{eq2}) is to be solved by generic solvers using gradient-based algorithms, constraints (\ref{eq2e}) and (\ref{eq2f}) should be transformed into smooth constraints in an explicit form. In the following, we outline general propositions about containing constraints (\ref{eq2e}) and separating constraints (\ref{eq2f}) and then exactly reformulate  (\ref{eq2e}) and (\ref{eq2f}) as explicit and differentiable constraints.  

\section{Notations}\label{sec3}
A convex polytope can be represented as a linear matrix inequality 
\begin{equation}\label{eq3}
   \mathcal P(P, \boldsymbol p, \boldsymbol s) = \{ \boldsymbol s \in \mathbb R^N \mid P\boldsymbol s \leq \boldsymbol p \},
\end{equation}
where $P=\left[\boldsymbol p_{1,P}, \ldots, \boldsymbol p_{N_P,P}\right]^{\top} \in \mathbb{R}^{N_P \times N}$, $\boldsymbol{p} \in \mathbb{R}^{N_P}$. Here, $\boldsymbol{p}_{k,P}^{\top}$ and $p_{k}$ denote the {\it k}-th  row vector in $P$ and the {\it k}-th entry in $\boldsymbol{p}$, respectively.  

The symbol $V_P=\left[\boldsymbol{v}_{1,P}, \ldots, \boldsymbol{v}_{N_P,P}\right]\in\mathbb R^{N\times N_P}$ represents a matrix of vertices in polytope $\mathcal P$. Symbol  $\boldsymbol{v}_{i,P}$ describes the position of the {\it i}-th vertex of polytope $\mathcal P$. 

An ellipsoid can be denoted as a quadratic inequality
\begin{equation}\label{eq4}
\mathcal E(E, \boldsymbol e, \boldsymbol s) =\left\{\boldsymbol s \in \mathbb{R}^N \mid\left(\boldsymbol s -\boldsymbol e\right)^{\top} E \left(\boldsymbol s-\boldsymbol e\right) \leq 1\right\},
\end{equation}
where $\boldsymbol{e} \in \mathbb{R}^N$ denotes the center of the ellipsoid and $E \in \mathbb{R}^{N \times N} $ is a positive definite, symmetric matrix defining the shape.

A hyperplane with normal vector $\boldsymbol{\lambda} \in \mathbb{R}^N$ can be defined as 
\begin{equation}\label{eq5}
    \mathcal H(\boldsymbol{\lambda}, \mu)=\left\{\boldsymbol{s} \in \mathbb{R}^N \mid \boldsymbol{\lambda}^\top \boldsymbol{s} = \mu\right\}.
\end{equation}
The hyperplane $\mathcal H(\boldsymbol{\lambda}, \mu)$ divides the whole space $\mathbb{R}^N$ into two closed half-spaces, denoted as 
\begin{equation}\label{eq6}
\mathcal H^{-}(\boldsymbol{\lambda}, \mu)=\left\{\boldsymbol{s} \in \mathbb{R}^N \mid \boldsymbol{\lambda}^\top \boldsymbol{s} \leq \mu\right\},
\end{equation} 
and 
\begin{equation}\label{eq7}
\mathcal H^{+}(\boldsymbol{\lambda}, \mu)=\left\{\boldsymbol{s} \in \mathbb{R}^N \mid \boldsymbol{\lambda}^\top \boldsymbol{s} \geq \mu\right\}.
\end{equation} 

In the following sections, the constrained region is described by an intersection of convex polytopes and ellipsoids, denoted as $\mathcal W=\bigcap_l \mathcal W_l$. The controlled object or obstacle is described using a union of convex polytopes and ellipsoids, denoted as $\mathcal B=\bigcup_m \mathcal B_m$, ${\mathcal O=\bigcup_n \mathcal O_n}$.  Sets $\mathcal W_l, \mathcal B_m, \mathcal O_n$ are either polytopes or ellisposids, described using representations \eqref{eq3} and \eqref{eq4}.

\begin{figure}[!t]
\centering
\subfloat[]{\includegraphics[width=0.1\textwidth]{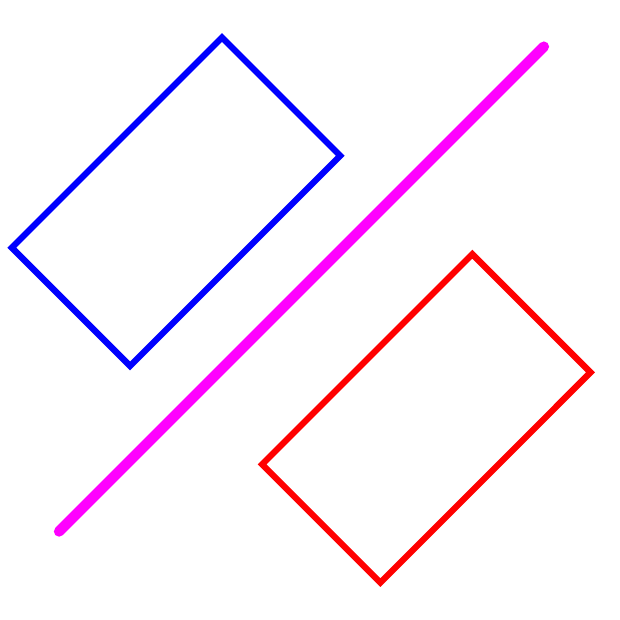}\hspace{2mm}
\label{fig5a}}
\subfloat[]{\includegraphics[width=0.1\textwidth]{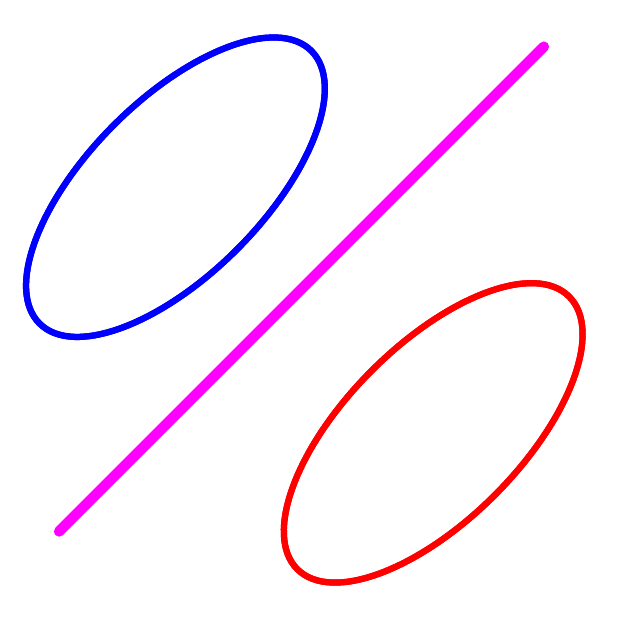}\hspace{2mm}
\label{fig5b}}
\subfloat[]{\includegraphics[width=0.1\textwidth]{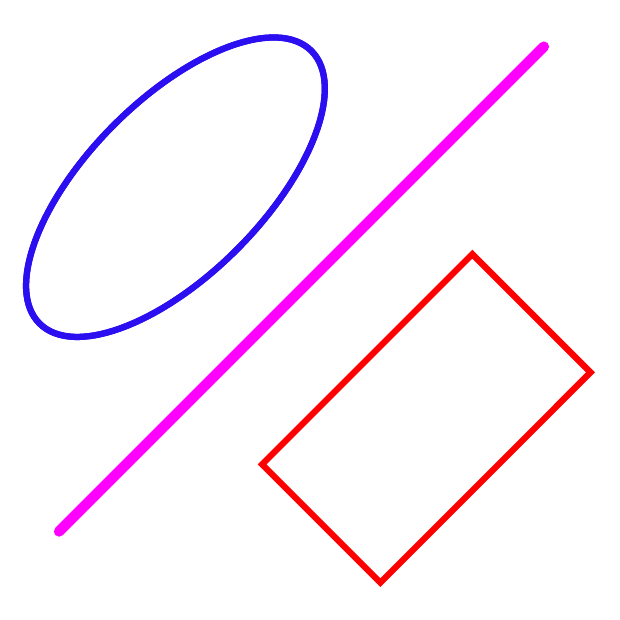}\hspace{2mm}
\label{fig5c}}
\hfill
\subfloat[]{\includegraphics[width=0.1\textwidth]{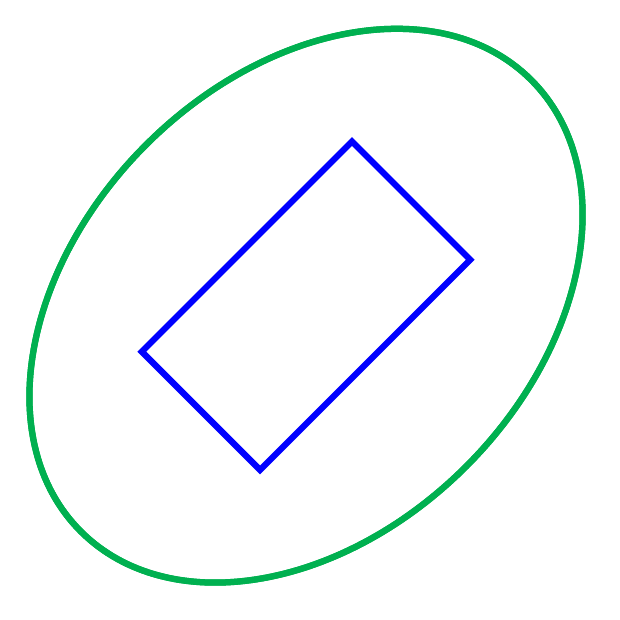}%
\label{fig5d}}
\subfloat[]{\includegraphics[width=0.1\textwidth]{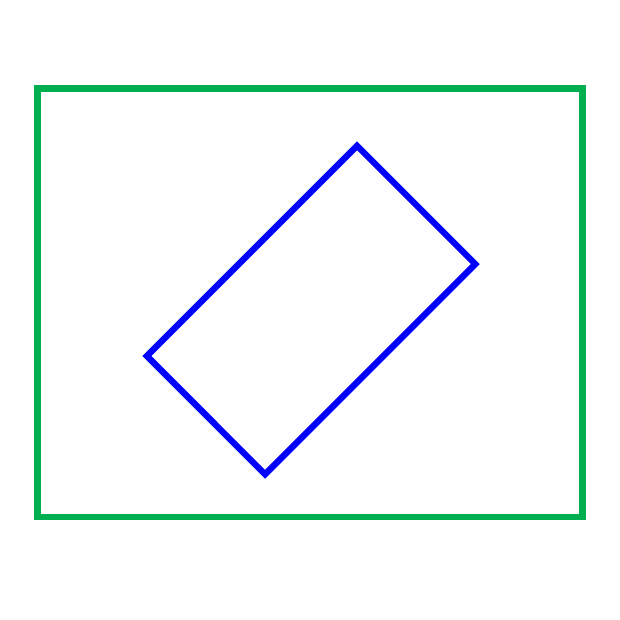}%
\label{fig5e}}
\subfloat[]{\includegraphics[width=0.1\textwidth]{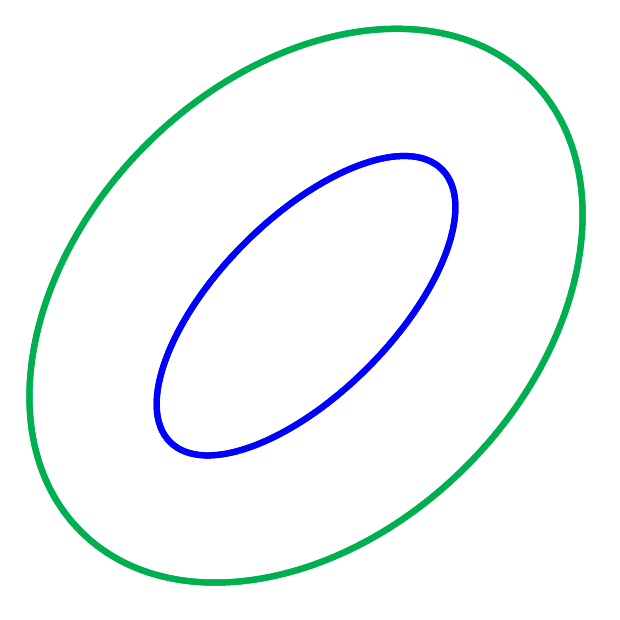}%
\label{fig5f}}
\subfloat[]{\includegraphics[width=0.1\textwidth]{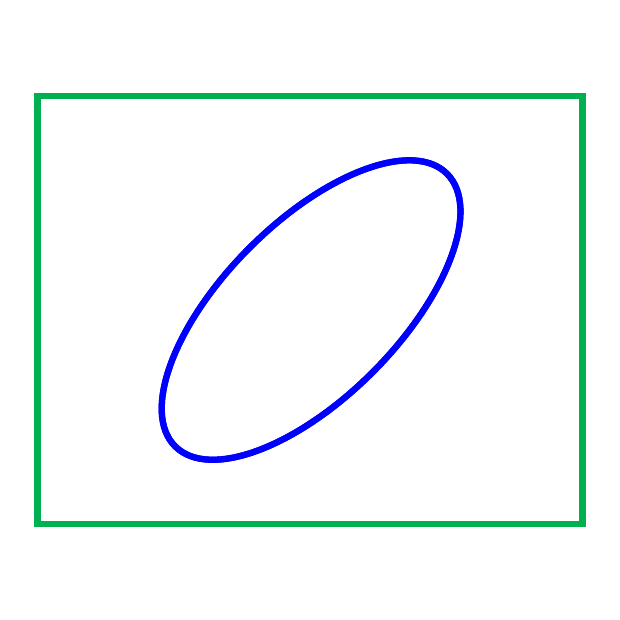}%
\label{fig5g}}
\caption{For readability, this figure illustrates the studied problems in two dimensions as examples. In subplots (a)--(g) the controlled object $\mathcal B_m$ is depicted in blue. In (a)--(c) the obstacle $\mathcal O_n$ is depicted in red. In (d)--(g) the constrained region $\mathcal W_l$ is depicted in green. (a)--(c) illustrate the problem of $\mathcal B_m \cap \mathcal O_n =\emptyset$, i.e., (a) Separation between two polytopes; (b) Separation between two ellipsoids; (c) Separation between a polytope and an ellipsoid. (d)--(g) illustrate the problem of $\mathcal B_m \subset \mathcal W_l$, i.e., (d) Polytope contained in ellipsoid; (e) Polytope contained in polytope; (f) Ellipsoid contained in ellipsoid; (g) Ellipsoid contained in polytope.} 
\label{fig5}
\end{figure}

\section{Separating constraints}\label{sec4}
In this section, we aim at formulating effective collision avoidance constraints between the controlled object and obstacles, denoted as $\mathcal B(\boldsymbol{\xi}) \cap \mathcal O=\emptyset$ in \eqref{eq2}. Since $\mathcal B=\bigcup_m \mathcal B_m$, $\mathcal O=\bigcup_m \mathcal O_n $, separating constraints $\mathcal B(\boldsymbol{\xi}) \cap \mathcal O=\emptyset$ can be represented as
\begin{equation}\label{eq9}
\mathcal B \cap \mathcal O =\emptyset \iff  \mathcal B_m \cap \mathcal O_n =\emptyset, \ \forall m, n.
\end{equation}
As mentioned, set $\mathcal B_m$ or $\mathcal O_n$ can either be a convex polytope or an ellipsoid. To approach \eqref{eq9}, the conditions between two polytopes, between ellipsoids, and between a polytope and an ellipsoid need to be considered, respectively (see Fig.~\ref{fig5}). In the following, we first outline the hyperplane separation theorem and then two general propositions are stated for a polytope contained inside a convex set and an ellipsoid contained inside a half-space. Finally the requirement of $\mathcal B_m \cap \mathcal O_n=\emptyset$ is reformulated as explicit constraints by exploiting the propositions.

\begin{theorem}[Hyperplane separation theorem \cite{ref36}]\label{theorem1}
    Let $\mathcal C^1$ and  $\mathcal C^2$ represent two compact convex sets in $\mathbb R^N$. Then,
    \begin{equation}\label{eq10}
    \begin{aligned}
    \mathcal C^1 \cap \mathcal C^2=\emptyset \iff \exists \boldsymbol{\lambda} \in \mathbb{R}^N \backslash \boldsymbol{0}, \mu \in \mathbb{R}: \ \ \ \ \ \ \ \ \ \  \\
    \forall \boldsymbol{q} \in \mathcal C^1, \forall \boldsymbol{p} \in \mathcal C^2, \boldsymbol{\lambda}^\top\boldsymbol{q} \geq \mu, \boldsymbol{\lambda}^\top \boldsymbol{p} \leq \mu.
    \end{aligned}
    \end{equation} 
\end{theorem}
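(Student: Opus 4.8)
The plan is to prove the nontrivial forward implication ($\Rightarrow$) by reducing the separation of two sets to the separation of a single point from a compact convex set, and to treat the reverse implication ($\Leftarrow$) as the easy one. I would first dispatch the reverse direction: once a separating hyperplane exists, a common point $\boldsymbol{s} \in \mathcal{C}^1 \cap \mathcal{C}^2$ would have to satisfy $\boldsymbol{\lambda}^\top \boldsymbol{s} \geq \mu$ and $\boldsymbol{\lambda}^\top \boldsymbol{s} \leq \mu$ simultaneously, and hence lie on the hyperplane; the forward construction below actually produces a \emph{strictly} separating hyperplane (a positive gap between the two sets), which then forbids any such common point. Thus the genuine content lies in the forward direction.

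For the forward direction I would form the Minkowski difference
\begin{equation*}
\mathcal{D} := \mathcal{C}^1 - \mathcal{C}^2 = \{\boldsymbol{q} - \boldsymbol{p} : \boldsymbol{q} \in \mathcal{C}^1, \ \boldsymbol{p} \in \mathcal{C}^2\}.
\end{equation*}
Since $\mathcal{C}^1$ and $\mathcal{C}^2$ are convex, $\mathcal{D}$ is convex, and since both are compact, $\mathcal{D}$ is the image of the compact set $\mathcal{C}^1 \times \mathcal{C}^2$ under the continuous map $(\boldsymbol{q},\boldsymbol{p}) \mapsto \boldsymbol{q}-\boldsymbol{p}$, hence compact. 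The hypothesis $\mathcal{C}^1 \cap \mathcal{C}^2 = \emptyset$ is precisely the statement $\boldsymbol{0} \notin \mathcal{D}$. This reduces the theorem to separating the origin from a compact convex set that does not contain it.

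I would then construct the separating direction by projection. Because $\mathcal{D}$ is compact and $\boldsymbol{d} \mapsto \|\boldsymbol{d}\|^2$ is continuous, this function attains its minimum over $\mathcal{D}$ at some $\boldsymbol{d}^\star$, with $\boldsymbol{d}^\star \neq \boldsymbol{0}$ because $\boldsymbol{0} \notin \mathcal{D}$. Setting $\boldsymbol{\lambda} := \boldsymbol{d}^\star$, the key estimate is the variational (obtuse-angle) inequality $(\boldsymbol{d}-\boldsymbol{d}^\star)^\top \boldsymbol{d}^\star \geq 0$ for every $\boldsymbol{d} \in \mathcal{D}$, which gives $\boldsymbol{\lambda}^\top \boldsymbol{d} \geq \|\boldsymbol{d}^\star\|^2 > 0$. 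Translating back through $\boldsymbol{d} = \boldsymbol{q}-\boldsymbol{p}$ yields $\boldsymbol{\lambda}^\top \boldsymbol{q} - \boldsymbol{\lambda}^\top \boldsymbol{p} \geq \|\boldsymbol{d}^\star\|^2 > 0$ for all $\boldsymbol{q} \in \mathcal{C}^1$, $\boldsymbol{p} \in \mathcal{C}^2$, so $\inf_{\boldsymbol{q}} \boldsymbol{\lambda}^\top \boldsymbol{q} > \sup_{\boldsymbol{p}} \boldsymbol{\lambda}^\top \boldsymbol{p}$; choosing any $\mu$ strictly inside this gap delivers the required inequalities.

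The main obstacle is establishing the variational inequality $(\boldsymbol{d}-\boldsymbol{d}^\star)^\top \boldsymbol{d}^\star \geq 0$. I would obtain it from convexity alone: for $\boldsymbol{d} \in \mathcal{D}$ and $t \in [0,1]$ the point $\boldsymbol{d}^\star + t(\boldsymbol{d}-\boldsymbol{d}^\star)$ stays in $\mathcal{D}$, so the scalar map $t \mapsto \|\boldsymbol{d}^\star + t(\boldsymbol{d}-\boldsymbol{d}^\star)\|^2$ is minimized at $t=0$, and its nonnegative right derivative there equals $2(\boldsymbol{d}-\boldsymbol{d}^\star)^\top \boldsymbol{d}^\star$. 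Only two properties of $\mathcal{C}^1,\mathcal{C}^2$ are used: compactness (to guarantee the minimizer exists) and convexity (to guarantee the segment is feasible). I would close by remarking that this argument in fact furnishes strict separation, which both subsumes the non-strict inequalities appearing in the statement and supplies the emptiness invoked in the reverse implication.
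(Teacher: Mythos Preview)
The paper does not supply its own proof of Theorem~\ref{theorem1}; the result is quoted from \cite{ref36} and used as a black box, so there is nothing to compare against on that front. Your forward-direction argument via the Minkowski difference $\mathcal{D}=\mathcal{C}^1-\mathcal{C}^2$ and projection of the origin onto $\mathcal{D}$ is the standard textbook route and is correct: compactness guarantees the minimizer $\boldsymbol{d}^\star$ exists with $\boldsymbol{d}^\star\neq\boldsymbol{0}$, and convexity yields the obtuse-angle inequality, delivering in fact strict separation.

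Your handling of the reverse implication, however, is circular. You correctly note that with the non-strict inequalities in \eqref{eq10} a hypothetical common point $\boldsymbol{s}$ is only forced onto the hyperplane $\boldsymbol{\lambda}^\top\boldsymbol{s}=\mu$, which is no contradiction; you then invoke the strict gap produced by the forward construction, but that construction presupposes $\mathcal{C}^1\cap\mathcal{C}^2=\emptyset$, the very conclusion you are trying to reach. In fact the reverse implication \emph{fails} as stated: take $\mathcal{C}^1=[0,1]$, $\mathcal{C}^2=[-1,0]$ in $\mathbb{R}$ with $\boldsymbol{\lambda}=1$, $\mu=0$; both inequalities hold yet $\mathcal{C}^1\cap\mathcal{C}^2=\{0\}\neq\emptyset$. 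The biconditional becomes true only if at least one of the inequalities is made strict (the usual formulation for disjoint compact convex sets). The defect is therefore in the theorem statement itself rather than in your forward argument, but your write-up should flag the issue explicitly instead of attempting to argue around it.
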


\begin{prop}[Polytope within a convex set]\label{prop1}
Let a convex set be denoted as $\mathcal C(\Omega, \boldsymbol{\mu}, \mathcal{K})=\left\{\boldsymbol{s} \in \mathbb{R}^N \mid \Omega \boldsymbol{s} \preceq_{\mathcal K} \boldsymbol{\mu}\right\}$\footnote{This form is entirely generic since any compact convex set admits a conic representation. A polytope can be represented by choosing $\mathcal K= \mathbb{R}^N_{+}$.  An ellipsoid is represented by letting $\mathcal K$ be the second-order cone.\cite{ref3,ref37,ref38}} where $\Omega \in \mathbb R^{M \times N}$, $ \boldsymbol{\mu} \in \mathbb R^{M}$, and $\mathcal K \subset \mathbb R^{M}$ is a closed convex pointed cone with nonempty interior. Let $\mathcal V_{\mathcal P}$ represent the set of vertices of polytope $\mathcal P$ denoted as \eqref{eq3}. Then,
\begin{equation}\label{eq11}
\begin{aligned}
\mathcal \mathcal V_{\mathcal P} \subset \mathcal C(\Omega, \boldsymbol{\mu}, \mathcal{K}) \iff \mathcal P \subset \mathcal C(\Omega, \boldsymbol{\mu}, \mathcal{K}).
\end{aligned}
\end{equation}
\end{prop}
\begin{proof}
Since any vertex in $\mathcal V_{\mathcal P}$ belongs to set $\mathcal P$, we have $\mathcal V_{\mathcal P} \subset \mathcal P$. Then, if $\mathcal P \subset \mathcal C(\Omega, \boldsymbol{\mu}, \mathcal{K})$ it follows that $\mathcal V_{\mathcal P} \subset \mathcal C(\Omega, \boldsymbol{\mu}, \mathcal{K})$. 
This proves the right-to-left implication. The left-to-right implication, given $\mathcal V_{\mathcal P} \subset \mathcal C(\Omega, \boldsymbol{\mu}, \mathcal{K})$ is proven by contradiction. Assume $\exists \boldsymbol{s} \in \mathcal P $, $ \boldsymbol{s}\notin \mathcal C(\Omega, \boldsymbol{\mu}, \mathcal{K})$, such that $\Omega \boldsymbol{s} \succ_{\mathcal K} \boldsymbol{\mu}$. It is known that any interior point of a convex polytope can be expressed as a convex combination of its vertices, where all coefficients are non-negative \cite{ref39}. Let $\boldsymbol{s}=\sum_{i=1}^{N_m}\alpha_i \boldsymbol{v}_{i,P}$, where $\alpha_i \geq 0$. Then, $\sum_{i=1}^{N_m}\alpha_i \Omega \boldsymbol{v}_{i,P} \succ_{\mathcal K} \boldsymbol{\mu}$, indicates that $\exists i$ for which $ \Omega \boldsymbol{v}_{i,P} \succ_{\mathcal K} \boldsymbol{\mu}$, i.e., $\boldsymbol{v}_{i,P} \notin \mathcal C(\Omega, \boldsymbol{\mu}, \mathcal{K})$, which contradicts with $\mathcal V_{\mathcal P} \subset \mathcal C(\Omega, \boldsymbol{\mu}, \mathcal{K})$. So, it follows that $\forall \boldsymbol{s} \in \mathcal P$, $\boldsymbol{s} \in \mathcal C(\Omega, \boldsymbol{\mu}, \mathcal{K})$, i.e., $ \mathcal P \subset \mathcal C(\Omega, \boldsymbol{\mu}, \mathcal{K})$, which completes the proof. 
\end{proof}

\begin{prop}[Ellipsoid within a half-space]\label{prop5}
Let an ellipsoid  $\mathcal E$ be denoted as \eqref{eq4}. Let half-space $\mathcal H^{-}(\boldsymbol{\lambda}, \mu)$ be denoted as \eqref{eq6}. Then, 
\begin{equation}\label{eq22}
 \begin{aligned}
\mathcal E \subset \mathcal H^{-}(\boldsymbol{\lambda}, \mu) =\emptyset \iff  \sqrt{\boldsymbol{\lambda}^{\top} E^{-1} \boldsymbol{\lambda}}+\boldsymbol{\lambda}^{\top} \boldsymbol{e} \leq \mu.
 \end{aligned}
\end{equation}
\end{prop}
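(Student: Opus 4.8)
The plan is to recognize that the containment $\mathcal E \subset \mathcal H^{-}(\boldsymbol{\lambda}, \mu)$ is nothing more than the requirement that the support of the ellipsoid in the direction $\boldsymbol{\lambda}$ not exceed $\mu$. Explicitly, every $\boldsymbol{s} \in \mathcal E$ satisfies $\boldsymbol{\lambda}^\top \boldsymbol{s} \leq \mu$ if and only if the largest value that $\boldsymbol{\lambda}^\top \boldsymbol{s}$ attains over $\mathcal E$ is bounded by $\mu$. So the first step is to reduce the set-containment to the scalar inequality $\max_{\boldsymbol{s} \in \mathcal E} \boldsymbol{\lambda}^\top \boldsymbol{s} \leq \mu$, after which the whole task collapses to evaluating this maximum in closed form and matching it against the right-hand side of \eqref{eq22}.

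To evaluate the maximum, I would first shift coordinates by $\boldsymbol{t} = \boldsymbol{s} - \boldsymbol{e}$, so that the feasible set becomes the centered ellipsoid $\{\boldsymbol{t} \mid \boldsymbol{t}^\top E \boldsymbol{t} \leq 1\}$ and the objective becomes $\boldsymbol{\lambda}^\top \boldsymbol{t} + \boldsymbol{\lambda}^\top \boldsymbol{e}$. The additive constant $\boldsymbol{\lambda}^\top \boldsymbol{e}$ separates out immediately, leaving $\max_{\boldsymbol{t}^\top E \boldsymbol{t} \leq 1} \boldsymbol{\lambda}^\top \boldsymbol{t}$ to be computed. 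Since $E$ is symmetric and positive definite, it admits a symmetric positive-definite square root $E^{1/2}$ with inverse $E^{-1/2}$; introducing $\boldsymbol{w} = E^{1/2} \boldsymbol{t}$ turns the constraint set into the Euclidean unit ball $\|\boldsymbol{w}\| \leq 1$ and the objective into $\big(E^{-1/2} \boldsymbol{\lambda}\big)^\top \boldsymbol{w}$.

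The key step is then a direct application of the Cauchy--Schwarz inequality over the unit ball, which gives $\max_{\|\boldsymbol{w}\| \leq 1} \big(E^{-1/2} \boldsymbol{\lambda}\big)^\top \boldsymbol{w} = \|E^{-1/2} \boldsymbol{\lambda}\| = \sqrt{\boldsymbol{\lambda}^\top E^{-1} \boldsymbol{\lambda}}$, the bound being attained at $\boldsymbol{w}^\star = E^{-1/2} \boldsymbol{\lambda} / \|E^{-1/2} \boldsymbol{\lambda}\|$ (equivalently, one could invoke Lagrangian/KKT stationarity to locate the maximizer). Substituting back through the coordinate changes yields $\max_{\boldsymbol{s} \in \mathcal E} \boldsymbol{\lambda}^\top \boldsymbol{s} = \sqrt{\boldsymbol{\lambda}^\top E^{-1} \boldsymbol{\lambda}} + \boldsymbol{\lambda}^\top \boldsymbol{e}$, and chaining this with the reduction of the first step delivers the claimed equivalence in both directions.

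I do not expect a substantive obstacle here, since the argument is a standard support-function computation. The only points that warrant care are invoking the existence and invertibility of $E^{1/2}$ (guaranteed by positive definiteness of $E$) and confirming that the Cauchy--Schwarz bound is actually \emph{attained} rather than merely an upper bound. The latter is immediate because the candidate maximizer satisfies $\|\boldsymbol{w}^\star\| = 1$ and hence lies in the closed unit ball, so that the supremum is a maximum and the equivalence in \eqref{eq22} is exact, with the boundary case $\sqrt{\boldsymbol{\lambda}^\top E^{-1} \boldsymbol{\lambda}} + \boldsymbol{\lambda}^\top \boldsymbol{e} = \mu$ corresponding precisely to the hyperplane $\mathcal H(\boldsymbol{\lambda}, \mu)$ being tangent to $\mathcal E$.
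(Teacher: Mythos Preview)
Your proposal is correct and follows essentially the same approach as the paper: both reduce the containment to the support-function inequality $\max_{\boldsymbol{s}\in\mathcal E}\boldsymbol{\lambda}^\top\boldsymbol{s}\leq\mu$, linearly transform the ellipsoid to the unit ball, and apply Cauchy--Schwarz to evaluate the maximum in closed form. The only cosmetic difference is that the paper parameterizes the ellipsoid as $\{M\boldsymbol{s}+\boldsymbol{e}:\|\boldsymbol{s}\|\leq 1\}$ with $M^\top M=E^{-1}$, whereas you use the symmetric square root $E^{1/2}$ via $\boldsymbol{w}=E^{1/2}(\boldsymbol{s}-\boldsymbol{e})$; these are equivalent changes of variables.
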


\begin{proof}
It is known that an ellipsoid can be described as an affine transformation of the unit sphere. Using the representation in \eqref{eq4}, $\mathcal E$ can be re-denoted as $\mathcal E :=\left\{ M\boldsymbol{s}+\boldsymbol{e} \mid \boldsymbol{s} \in \mathbb{R}^N, \Vert \boldsymbol{s} \Vert \leq 1 \right\}$, where $M$ is a positive definite matrix obtained by $M^\top M = E^{-1}$ \cite{ref91}. Thus, the requirement of $\mathcal E \subset \mathcal H^{-}(\boldsymbol{\lambda}, \mu) =\emptyset$ is identical to stating an optimization problem  $$\max_{\boldsymbol{s}} \boldsymbol{\lambda}^\top(M\boldsymbol{s}+\boldsymbol{e}) \leq \mu, \Vert \boldsymbol{s} \Vert \leq 1.$$ According to the well-known Cauchy-Schwartz inequality \cite{ref90}, $\boldsymbol{\lambda}^\top \boldsymbol{s} \leq \Vert \boldsymbol{\lambda} \Vert \Vert \boldsymbol{s} \Vert$. Since $\Vert \boldsymbol{s} \Vert \leq 1$, $\boldsymbol{\lambda}^\top\boldsymbol{s} \leq \Vert \boldsymbol{\lambda} \Vert$, where the right part is obtained by the vector $\boldsymbol{s}=\boldsymbol{\lambda}/\Vert \boldsymbol{\lambda} \Vert$, so $\max_{\boldsymbol{s}} \boldsymbol{\lambda}^\top(M\boldsymbol{s}+\boldsymbol{e})=\Vert \boldsymbol{\lambda} M^\top \Vert  + \boldsymbol{\lambda}^\top \boldsymbol{e}=\sqrt{\boldsymbol{\lambda}^{\top} E^{-1} \boldsymbol{\lambda}}+\boldsymbol{\lambda}^{\top} \boldsymbol{e} \leq \mu$, which completes the proof.
\end{proof}

In the following, we discuss three separation questions, i.e., how to formulate $\mathcal B_m \cap \mathcal O_n =\emptyset$ when set $\mathcal B_m$ is a polytope and $\mathcal O_n$ is a polytope, set $\mathcal B_m$ is an ellipsoid and $\mathcal O_n$ is an ellipsoid, set $\mathcal B_m$ is a polytope and $\mathcal O_n$ is an ellipsoid. 

\subsection{Separation between two polytopes}\label{sec4.1}

According to Theorem~\ref{theorem1}, when compact sets $\mathcal C^1$ and set $\mathcal C^2$ are disjoint, there exists a separating hyperplane between them, i.e., points belonging to $\mathcal C^1$ are on one side of the hyperplane while points of $\mathcal C^2$ are on the other side. However, checking all points of $\mathcal C^1$ and $\mathcal C^2$ is computationally expensive. One efficient way is only checking vertices of $\mathcal C^1$ and $\mathcal C^2$ when these sets are polytopes. 

\begin{prop}\label{prop2}
Let sets $\mathcal B_m$, $\mathcal O_n$ be denoted using polytopes $\mathcal P(P_1, \boldsymbol p_1, \boldsymbol s)$ and $\mathcal P(P_2, \boldsymbol p_2, \boldsymbol s)$ based on \eqref{eq3}. Let $V_{P_1}$ and $V_{P_2}$ be matrices of vertices as stated earlier. Then, 
\begin{equation}\label{eq12}
\begin{aligned}
\mathcal B_m \cap \mathcal O_n=\emptyset \iff  \exists \boldsymbol{\lambda} \in \mathbb{R}^N , \mu \in \mathbb{R}: \ \ \ \ \ \ \ \ \  \ \ \ \   \\ 
  \begin{array}{c}
\boldsymbol{\lambda}^\top V_{P_1} \geq \mu \boldsymbol{1}^\top, \boldsymbol{\lambda}^\top V_{P_2} \leq \mu \boldsymbol{1}^\top, \left\|\boldsymbol{\lambda}\right\| > 0.
\end{array}
\end{aligned}
\end{equation}
\end{prop}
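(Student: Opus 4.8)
The plan is to derive \eqref{eq12} by chaining the two results already established, namely the hyperplane separation theorem (Theorem~\ref{theorem1}) and the vertex reduction of Proposition~\ref{prop1}. First I would note that a bounded polytope of the form \eqref{eq3} is a compact convex set, so both $\mathcal B_m$ and $\mathcal O_n$ satisfy the hypotheses of Theorem~\ref{theorem1}. Applying that theorem with $\mathcal C^1 = \mathcal B_m$ and $\mathcal C^2 = \mathcal O_n$ yields the equivalence of $\mathcal B_m \cap \mathcal O_n = \emptyset$ with the existence of a nonzero $\boldsymbol{\lambda}$ and a scalar $\mu$ such that every point of $\mathcal B_m$ lies in $\mathcal H^{+}(\boldsymbol{\lambda},\mu)$ and every point of $\mathcal O_n$ lies in $\mathcal H^{-}(\boldsymbol{\lambda},\mu)$; that is, $\mathcal B_m \subset \mathcal H^{+}(\boldsymbol{\lambda},\mu)$ and $\mathcal O_n \subset \mathcal H^{-}(\boldsymbol{\lambda},\mu)$.

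The second step is to reduce these two all-points containments to conditions on the finitely many vertices. The key observation enabling this is that each half-space is an instance of the conic convex set of Proposition~\ref{prop1}: writing $\mathcal H^{+}(\boldsymbol{\lambda},\mu) = \{\boldsymbol{s} : -\boldsymbol{\lambda}^\top \boldsymbol{s} \le -\mu\}$ corresponds to the choice $\Omega = -\boldsymbol{\lambda}^\top$, $\boldsymbol{\mu} = -\mu$ and $\mathcal K = \mathbb R_{+}$, a closed convex pointed cone with nonempty interior; similarly $\mathcal H^{-}(\boldsymbol{\lambda},\mu)$ arises with $\Omega = \boldsymbol{\lambda}^\top$, $\boldsymbol{\mu} = \mu$. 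Invoking Proposition~\ref{prop1} once for $\mathcal B_m$ and once for $\mathcal O_n$, the containment of each polytope in its half-space is equivalent to the containment of just its vertex set, i.e. $\mathcal B_m \subset \mathcal H^{+}(\boldsymbol{\lambda},\mu) \iff \mathcal V_{\mathcal B_m}\subset \mathcal H^{+}(\boldsymbol{\lambda},\mu)$, and likewise for $\mathcal O_n$.

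Finally I would rewrite the vertex conditions in matrix form. The membership of every column $\boldsymbol v_{i,P_1}$ of $V_{P_1}$ in $\mathcal H^{+}(\boldsymbol{\lambda},\mu)$ reads $\boldsymbol{\lambda}^\top \boldsymbol v_{i,P_1} \ge \mu$ for all $i$, which stacks into $\boldsymbol{\lambda}^\top V_{P_1} \ge \mu \boldsymbol 1^\top$; symmetrically the vertices of $\mathcal O_n$ give $\boldsymbol{\lambda}^\top V_{P_2} \le \mu \boldsymbol 1^\top$. The nonzero requirement $\boldsymbol{\lambda} \in \mathbb R^N \backslash \boldsymbol 0$ of Theorem~\ref{theorem1} is simply restated as $\|\boldsymbol{\lambda}\| > 0$. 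Since every step above is an equivalence, composing them delivers \eqref{eq12} in both directions at once.

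As for difficulty, there is essentially no hard computation here; the proof is a bookkeeping composition of Theorem~\ref{theorem1} and Proposition~\ref{prop1}. The only points requiring care are the hypotheses rather than the conclusions, namely verifying that the polytopes are bounded (hence compact) so the separation theorem applies, and that a half-space legitimately fits the generic conic template $\mathcal C(\Omega,\boldsymbol{\mu},\mathcal K)$ so that the vertex reduction of Proposition~\ref{prop1} is available. Once these two hypotheses are checked, the equivalence \eqref{eq12} follows immediately.
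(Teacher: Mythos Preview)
Your proposal is correct and follows essentially the same approach as the paper's own proof: compactness of the polytopes, invocation of Theorem~\ref{theorem1} to obtain the separating hyperplane, reduction to vertices via Proposition~\ref{prop1}, and restatement of $\boldsymbol{\lambda}\neq\boldsymbol{0}$ as $\|\boldsymbol{\lambda}\|>0$. If anything, you are slightly more careful than the paper in explicitly checking that a half-space instantiates the conic template $\mathcal C(\Omega,\boldsymbol{\mu},\mathcal K)$ before invoking Proposition~\ref{prop1}.
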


\begin{proof}
According to \eqref{eq3}, sets $\mathcal B_m$ and $\mathcal O_n$ are closed and bounded. Since a closed and bounded set of $R^N$ is compact \cite{ref40}, $\mathcal B_m$ and $\mathcal O_n$ are compact sets. Thus, based on the hyperplane separation theorem $\mathcal B_m \cap \mathcal O_n=\emptyset$ is identical to stating that there exists a separating hyperplane between them.  Let $ \mathcal H^{+}(\boldsymbol{\lambda}, \mu) $ and $\mathcal H^{-}(\boldsymbol{\lambda}, \mu) $ be defined as stated earlier, i.e., $\mathcal H^{-}(\boldsymbol{\lambda}, \mu)=\left\{\boldsymbol{q} \in \mathbb{R}^N \mid \boldsymbol{\lambda}^\top \boldsymbol{q} \leq \mu\right\}$ and $\mathcal H^{+}(\boldsymbol{\lambda}, \mu)=\left\{\boldsymbol{q} \in \mathbb{R}^N \mid \boldsymbol{\lambda}^\top \boldsymbol{q} \geq \mu\right\}$. Assume $\mathcal B_m \subset \mathcal H^{+}(\boldsymbol{\lambda}, \mu) $, $\mathcal O_n \subset \mathcal H^{-}(\boldsymbol{\lambda}, \mu) $. Using Proposition~\ref{prop1}, formulations $\mathcal B_m \subset \mathcal H^{+}(\boldsymbol{\lambda}, \mu) $, $\mathcal O_n \subset \mathcal H^{-}(\boldsymbol{\lambda}, \mu) $ can be reformulated by ensuring that the vertices of $\mathcal B_m$ are in half-space $\mathcal H^{+}(\boldsymbol{\lambda}, \mu)$ and vertices of $\mathcal O_n$ are in the other half-space $\mathcal H^{-}(\boldsymbol{\lambda}, \mu)$, i.e., $\boldsymbol{\lambda}^\top V_{P_1} \geq \mu \boldsymbol{1}^\top, \boldsymbol{\lambda}^\top V_{P_2} \leq \mu \boldsymbol{1}^\top.$ Additionally, $\left\|\boldsymbol{\lambda}\right\| > 0$ is the sufficient and necessary condition of $\boldsymbol{\lambda} \ne \boldsymbol{0}$, ensuring the existence of such a separating hyperplane,  which completes the proof.
\end{proof}

\subsection{Separation between two ellipsoids}\label{sec4.2}
\begin{prop}\label{prop3}
Let sets $\mathcal B_m$, $\mathcal O_n$ be denoted using ellipsoids $\mathcal E(E_1, \boldsymbol e_1, \boldsymbol s)$ and $\mathcal E(E_2, \boldsymbol e_2, \boldsymbol s)$  based on \eqref{eq4}. Then,
\begin{equation}\label{eq13}
\begin{aligned}
\mathcal B_m \cap \mathcal O_n=\emptyset \iff  \exists \boldsymbol{\lambda} \in \mathbb R^N, \mu \in  \mathbb R: \Vert \boldsymbol{\lambda} \Vert > 0,\ \ \ \ \ \ \ \ \ \    \\
\sqrt{\boldsymbol{\lambda}^{\top} E_{1}^{-1} \boldsymbol{\lambda}}+\boldsymbol{\lambda}^{\top} \boldsymbol{e_{1}} \leq \mu, -\sqrt{\boldsymbol{\lambda}^{\top} E_2^{-1} \boldsymbol{\lambda}} + \boldsymbol{\lambda}^{\top} \boldsymbol{e_2} \geq \mu.\\
\end{aligned}
\end{equation}
\end{prop}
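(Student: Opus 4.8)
The plan is to mirror the argument used for Proposition~\ref{prop2}, but to replace the vertex-based containment check with the analytic ellipsoid-in-half-space condition of Proposition~\ref{prop5}. First I would observe that each ellipsoid $\mathcal E(E_1, \boldsymbol e_1, \boldsymbol s)$ and $\mathcal E(E_2, \boldsymbol e_2, \boldsymbol s)$, being the sublevel set of a continuous strictly convex quadratic, is closed and bounded, hence compact and convex. Theorem~\ref{theorem1} then applies directly: $\mathcal B_m \cap \mathcal O_n = \emptyset$ holds if and only if there exist $\boldsymbol{\lambda} \neq \boldsymbol{0}$ and $\mu \in \mathbb{R}$ placing $\mathcal B_m$ in one closed half-space and $\mathcal O_n$ in the other. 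Fixing an orientation, I would require $\mathcal B_m \subset \mathcal H^{-}(\boldsymbol{\lambda}, \mu)$ and $\mathcal O_n \subset \mathcal H^{+}(\boldsymbol{\lambda}, \mu)$, and I would encode $\boldsymbol{\lambda} \neq \boldsymbol{0}$ equivalently as $\|\boldsymbol{\lambda}\| > 0$, exactly as in Proposition~\ref{prop2}.

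The first containment $\mathcal B_m \subset \mathcal H^{-}(\boldsymbol{\lambda}, \mu)$ is precisely the hypothesis of Proposition~\ref{prop5} with $(E, \boldsymbol e) = (E_1, \boldsymbol e_1)$, which immediately yields the first inequality $\sqrt{\boldsymbol{\lambda}^{\top} E_1^{-1} \boldsymbol{\lambda}} + \boldsymbol{\lambda}^{\top} \boldsymbol e_1 \leq \mu$. The step requiring a little care is the second containment, since Proposition~\ref{prop5} is stated only for the lower half-space $\mathcal H^{-}$. Here I would use the identity $\mathcal H^{+}(\boldsymbol{\lambda}, \mu) = \mathcal H^{-}(-\boldsymbol{\lambda}, -\mu)$, so that $\mathcal O_n \subset \mathcal H^{+}(\boldsymbol{\lambda}, \mu)$ is equivalent to $\mathcal O_n \subset \mathcal H^{-}(-\boldsymbol{\lambda}, -\mu)$. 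Applying Proposition~\ref{prop5} with normal $-\boldsymbol{\lambda}$, offset $-\mu$, and $(E, \boldsymbol e) = (E_2, \boldsymbol e_2)$ gives $\sqrt{\boldsymbol{\lambda}^{\top} E_2^{-1} \boldsymbol{\lambda}} - \boldsymbol{\lambda}^{\top} \boldsymbol e_2 \leq -\mu$, which rearranges to the second stated inequality $-\sqrt{\boldsymbol{\lambda}^{\top} E_2^{-1} \boldsymbol{\lambda}} + \boldsymbol{\lambda}^{\top} \boldsymbol e_2 \geq \mu$. Note that the quadratic term is unchanged under the sign flip because it is even in $\boldsymbol{\lambda}$.

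Both implications then close. Given disjointness, the separating hyperplane produced by Theorem~\ref{theorem1} furnishes $\boldsymbol{\lambda}, \mu$ satisfying the two inequalities with $\|\boldsymbol{\lambda}\| > 0$; conversely, any $\boldsymbol{\lambda}, \mu$ with $\|\boldsymbol{\lambda}\| > 0$ meeting the two inequalities reconstructs, via Proposition~\ref{prop5} in each direction, the two half-space containments, placing $\mathcal B_m$ and $\mathcal O_n$ on opposite sides of $\mathcal H(\boldsymbol{\lambda}, \mu)$, so that their intersection is empty. I expect the only genuine obstacle to be the sign-flip bookkeeping for the $\mathcal H^{+}$ side; the remainder is a direct invocation of Theorem~\ref{theorem1} and Proposition~\ref{prop5}.
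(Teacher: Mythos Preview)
Your proposal is correct and follows essentially the same approach as the paper: establish compactness, invoke Theorem~\ref{theorem1} to obtain a separating hyperplane, and then apply Proposition~\ref{prop5} to each ellipsoid to convert the half-space containments into the stated scalar inequalities. Your write-up is in fact more detailed than the paper's proof, which only sketches these steps; in particular, your explicit handling of the sign flip $\mathcal H^{+}(\boldsymbol{\lambda},\mu)=\mathcal H^{-}(-\boldsymbol{\lambda},-\mu)$ to derive the second inequality is a useful clarification the paper leaves implicit.
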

\begin{proof}
For the same argumentation as the Proof of Proposition~\ref{prop2}, sets $\mathcal B_m$ and $\mathcal O_n$ are compact, and then using hyperplane separation theorem $\mathcal B_m \cap \mathcal O_n=\emptyset$ is identical to stating they can be separated by a hyperplane. And then Proposition~\ref{prop3} can be approached by using Proposition~\ref{prop5}.
\end{proof}

\subsection{Separation between a polytope and an ellipsoid}\label{sec4.3}

\begin{prop}\label{prop4}
Let set $\mathcal B_m$ be denoted using polytope $\mathcal P(P, \boldsymbol p, \boldsymbol s)$ as \eqref{eq3}. let $V_{P}$ be stated earlier. Let set $\mathcal O_n$ be denoted using ellipsoid $\mathcal E(E, \boldsymbol e, \boldsymbol s)$ as \eqref{eq4}. Then\footnote{The other way around, i.e., set $\mathcal B_m$ is an ellipsoid and set $\mathcal O_n$ is a polytope, works as well for formulation \eqref{eq21}.},
\begin{equation}\label{eq21}
\begin{aligned}
\mathcal B_m \cap \mathcal O_n=\emptyset \iff  \exists \boldsymbol{\lambda} \in \mathbb{R}^N , \mu \in \mathbb{R}: \ \ \ \ \ \ \ \ \  \ \  \\ 
  \begin{array}{c}
\boldsymbol{\lambda}^\top V_P \geq \mu \boldsymbol{1}^\top, \sqrt{\boldsymbol{\lambda}^{\top} E_{1}^{-1} \boldsymbol{\lambda}}+\boldsymbol{\lambda}^{\top} \boldsymbol{e_{1}} \leq \mu, \left\|\boldsymbol{\lambda}\right\| > 0.\\
   \\
\end{array}
\end{aligned}
\end{equation}
\end{prop}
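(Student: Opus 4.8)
The plan is to follow exactly the template established in the proofs of Proposition~\ref{prop2} and Proposition~\ref{prop3}, recognizing that this statement is a hybrid of the two: the polytope side is handled as in Proposition~\ref{prop2} via Proposition~\ref{prop1}, while the ellipsoid side is handled as in Proposition~\ref{prop3} via Proposition~\ref{prop5}. First I would establish that both $\mathcal B_m$ and $\mathcal O_n$ are compact convex sets in $\mathbb R^N$: the polytope $\mathcal P(P,\boldsymbol p,\boldsymbol s)$ is closed and bounded, hence compact, and the ellipsoid $\mathcal E(E,\boldsymbol e,\boldsymbol s)$ is likewise compact. This makes Theorem~\ref{theorem1} applicable, so that $\mathcal B_m \cap \mathcal O_n=\emptyset$ is equivalent to the existence of a hyperplane $\mathcal H(\boldsymbol\lambda,\mu)$ separating the two sets.

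Next I would fix the orientation of that hyperplane so that the polytope lies in the closed half-space $\mathcal H^{+}(\boldsymbol\lambda,\mu)$ and the ellipsoid lies in the opposite half-space $\mathcal H^{-}(\boldsymbol\lambda,\mu)$, which is precisely the assignment $\mathcal C^1=\mathcal B_m$, $\mathcal C^2=\mathcal O_n$ in Theorem~\ref{theorem1}. For the polytope inclusion $\mathcal B_m \subset \mathcal H^{+}(\boldsymbol\lambda,\mu)$, I would invoke Proposition~\ref{prop1} with the ambient convex set taken to be the half-space $\mathcal H^{+}(\boldsymbol\lambda,\mu)$, reducing containment of the entire polytope to containment of its vertices, i.e. $\boldsymbol\lambda^\top V_P \geq \mu\boldsymbol 1^\top$. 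For the ellipsoid inclusion $\mathcal O_n \subset \mathcal H^{-}(\boldsymbol\lambda,\mu)$, I would apply Proposition~\ref{prop5} directly, which yields $\sqrt{\boldsymbol\lambda^\top E^{-1}\boldsymbol\lambda}+\boldsymbol\lambda^\top\boldsymbol e \leq \mu$. Combining the two, together with the requirement $\|\boldsymbol\lambda\|>0$ encoding $\boldsymbol\lambda\neq\boldsymbol 0$ so that a genuine hyperplane exists, reproduces \eqref{eq21}.

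Because the two constituent reductions are already supplied by Propositions~\ref{prop1} and~\ref{prop5}, I do not expect a substantive obstacle here; the work is mostly bookkeeping of sign conventions, namely attaching the polytope to the $\geq\mu$ side (matching Proposition~\ref{prop1} applied to $\mathcal H^{+}$) and the ellipsoid to the $\leq\mu$ side (matching the native orientation of Proposition~\ref{prop5}). The one point I would be careful about is the subtlety already implicit in Propositions~\ref{prop2} and~\ref{prop3}: Theorem~\ref{theorem1} is stated with weak inequalities, so the reverse implication places $\mathcal B_m$ and $\mathcal O_n$ in closed half-spaces that share a common boundary hyperplane. I would simply note that this is consistent with the framework already adopted for the two-polytope and two-ellipsoid cases. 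Finally, the footnote's ``other way around'' claim follows from the symmetric role of the two sets in Theorem~\ref{theorem1}: one relabels which set is the polytope and which is the ellipsoid (equivalently, negates the pair $(\boldsymbol\lambda,\mu)$ to swap $\mathcal H^{+}$ and $\mathcal H^{-}$), and the same two reductions apply unchanged.
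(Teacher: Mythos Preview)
Your proposal is correct and follows essentially the same approach as the paper: the paper's proof simply states that Proposition~\ref{prop4} is obtained by combining the arguments of Propositions~\ref{prop2} and~\ref{prop3} via the hyperplane separation theorem, which is exactly what you have spelled out in detail. Your expansion (compactness, orientation of half-spaces, invoking Proposition~\ref{prop1} on the polytope side and Proposition~\ref{prop5} on the ellipsoid side) is a faithful elaboration of that one-line proof.
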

\begin{proof}
Following the proof of Proposition~\ref{prop2} and  Proposition~\ref{prop3}, the hyperplane separation theorem is utilized to formulate Proposition~\ref{prop4}.
\end{proof}

\begin{remark}
Notably, collision formulations related to the ellipsoid representation, e.g., in Proposition~\ref{prop3} and Proposition~\ref{prop4}, involve the square root, which is not actually differentiable at the origin. However, such a singularity in its gradient at the origin is excluded because $\left\|\boldsymbol{\lambda}\right\| > 0$ \cite{ref1, ref33} . In practice, it can be approached by $\left\|\boldsymbol{\lambda}\right\| \geq \epsilon$ , where $\epsilon$ is set as a small value, to ensure $\left\|\boldsymbol{\lambda}\right\|$ strictly positive. Additionally, the proposed geometry-based guesses, covered in Section~\ref{sec4.2}, guarantee that $\boldsymbol{\lambda}$ is initialized sufficiently away from the origin, so numerical solvers do not run into problems. This is also supported by the numerical results in Section~\ref{sec7}.
\end{remark}

\section{Containing constraints}\label{sec5}
In this part, we continue to discuss the explicit form of containing constraints \eqref{eq2e} in problem \eqref{eq2}. Since the controlled object is denoted as a union and the constrained region is represented as an intersection, i.e., $\mathcal B=\bigcup_m \mathcal B_m$, $\mathcal W=\bigcap_l \mathcal W_l$, containing constraints \eqref{eq2e} can be substituted with 
\begin{equation}\label{eq27}
\mathcal B \subset \mathcal W \iff  \mathcal B_m \subset \mathcal W_l, \ \forall m,l, 
\end{equation}
where \eqref{eq27} means that set $B_m$ is a subset of set $\mathcal W_l$. The sole criterion for $\mathcal B_m \subset \mathcal W_l$ is identical to stating that points of set $\mathcal B_m$ belong to set $\mathcal W_l$. However, checking all points of $\mathcal B_m$ is impossible in reality. Next, by using the previous propositions and S-procedure, we explicitly formulate $\mathcal B_m \subset \mathcal W_l$ when $B_m$ and $\mathcal W_l$ are represented as convex polytopes and ellipsoids, respectively (see Fig.~\ref{fig5}).

\subsection{A polytope contained in a polytope or an ellipsoid}\label{sec5.1}

When set $\mathcal B_m$ is a polytope,  $\mathcal B_m \subset \mathcal W_l$ is identical to stating that the vertices of $\mathcal B_m$ are inside $\mathcal W_l$ by using Proposition~\ref{prop1}. Since $\mathcal W_l$ is denoted as either a convex polytope or ellipsoid, the containing constraint $\mathcal B_m \subset \mathcal W_l$ is formulated as a linear inequality or quadratic inequality. 

\begin{prop}\label{prop6}
Let set $\mathcal B_m$ be denoted using polytope $\mathcal P(P_1, \boldsymbol p_1, \boldsymbol s)$ as \eqref{eq3}. Let set $\mathcal W_l$ be denoted using polytope $\mathcal P(P_2, \boldsymbol p_2, \boldsymbol s)$ as \eqref{eq3} or ellipsoid $\mathcal E(E, \boldsymbol e, \boldsymbol s)$ as \eqref{eq4}. Then, $\mathcal B_m \subset \mathcal W_l \iff $
\begin{equation}\label{eq28}
\begin{aligned}
\begin{cases}
    P_2 \boldsymbol{v}_{i,P_1}\leq \boldsymbol{p}_2, \  \forall i & \tx{if} \ \mathcal W_l:= \eqref{eq3}\\
    \left(\boldsymbol{v}_{i,P_1}-\boldsymbol{e}\right)^{\top} E\left(\boldsymbol{v}_{i,P_1}-\boldsymbol{e}\right) \leq 1, \ \forall i & \tx{if} \ \mathcal W_l:= \eqref{eq4}
\end{cases} .
\end{aligned}
\end{equation}
\end{prop}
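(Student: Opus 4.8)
The plan is to obtain both branches of \eqref{eq28} from a single application of Proposition~\ref{prop1}. Since $\mathcal B_m$ is the polytope $\mathcal P(P_1, \boldsymbol p_1, \boldsymbol s)$ and $\mathcal W_l$ is a compact convex set---either a polytope or an ellipsoid, both of which are instances of the generic conic set $\mathcal C(\Omega, \boldsymbol{\mu}, \mathcal K)$ appearing in Proposition~\ref{prop1}---I would first invoke that proposition to replace the full containment $\mathcal B_m \subset \mathcal W_l$ with the equivalent vertex condition: every vertex $\boldsymbol v_{i,P_1}$ of $\mathcal B_m$ lies in $\mathcal W_l$. This is the crucial reduction, turning the containment of infinitely many points into finitely many membership tests, and it holds because a polytope is the convex hull of its vertices.

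Next I would make the vertex-membership condition explicit according to the representation of $\mathcal W_l$. If $\mathcal W_l$ is the polytope $\mathcal P(P_2, \boldsymbol p_2, \boldsymbol s)$, then by the defining inequality \eqref{eq3} the statement $\boldsymbol v_{i,P_1} \in \mathcal W_l$ is exactly $P_2 \boldsymbol v_{i,P_1} \leq \boldsymbol p_2$; imposing this for all $i$ gives the first branch of \eqref{eq28}. If instead $\mathcal W_l$ is the ellipsoid $\mathcal E(E, \boldsymbol e, \boldsymbol s)$, then by \eqref{eq4} the statement $\boldsymbol v_{i,P_1} \in \mathcal W_l$ is exactly $(\boldsymbol v_{i,P_1}-\boldsymbol e)^{\top} E (\boldsymbol v_{i,P_1}-\boldsymbol e) \leq 1$; imposing this for all $i$ gives the second branch. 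Chaining the two equivalences yields the claimed ``iff'' in each case.

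I do not expect a genuine obstacle, as the result is essentially a specialization of Proposition~\ref{prop1} combined with the defining inequalities \eqref{eq3} and \eqref{eq4}. The only point worth checking is that Proposition~\ref{prop1} really applies to the ellipsoid case, i.e., that an ellipsoid qualifies as the conic set $\mathcal C(\Omega, \boldsymbol{\mu}, \mathcal K)$; this is secured by the footnote to Proposition~\ref{prop1}, where choosing $\mathcal K$ to be the second-order cone recovers an ellipsoid, so the vertex equivalence transfers verbatim. Hence the proof reduces to a two-line argument per case.
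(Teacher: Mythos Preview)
Your proposal is correct and matches the paper's own justification essentially line for line: the paper does not give a separate proof environment for this proposition but explains in the surrounding text that, by Proposition~\ref{prop1}, $\mathcal B_m \subset \mathcal W_l$ is equivalent to all vertices of $\mathcal B_m$ lying in $\mathcal W_l$, and then the explicit inequalities follow from the defining representations \eqref{eq3} and \eqref{eq4}. Your remark that the ellipsoid case is covered by the conic form $\mathcal C(\Omega,\boldsymbol{\mu},\mathcal K)$ via the second-order cone is exactly the footnote attached to Proposition~\ref{prop1}, so nothing further is needed.
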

Proposition~\ref{prop2} shows that $\mathcal B_m \subset \mathcal W_l$, in the condition of a polytope $\mathcal B_m$, can be formulated by imposing constraints on finite vertices of set $\mathcal B_m$. However, when set $\mathcal B_m$ is described by an ellipsoid as \eqref{eq4}, $\mathcal B_m \subset \mathcal W_l$ cannot be ensured by checking vertices of $\mathcal B_m$. To handle this, we propose to formulate $\mathcal B_m \subset \mathcal W_l$ by S-procedure and geometrical methods, respectively. Next, we discuss two questions, i.e., how to formulate $\mathcal B_m \subset \mathcal W_l$ when an ellipsoid $\mathcal B_m$ is a subset of an ellipsoid $\mathcal W_l$ and an ellipsoid $\mathcal B_m$ is a subset of a polytope $\mathcal W_l$, respectively. 

\subsection{An ellipsoid contained in an ellipsoid}\label{sec5.2}

\begin{lemma}[S-procedure \cite{ref42}]\label{lemma1}
 Let $F_1, F_2\in \mathbb R ^{N \times N}$, $\boldsymbol{g_1}, \boldsymbol{g_2} \in \mathbb R ^{N}$, and $h_1, h_2 \in \mathbb R$. If $F_1=F_1^{\top}, F_2=F_2^{\top}$ and $\exists \boldsymbol{x} \in \mathbb R ^{N}$ for which the strict inequality $\boldsymbol{x}^{\top} F_1 \boldsymbol{x}+2 \boldsymbol{g_1}^{\top} \boldsymbol{x}+h_1<0$, then 
\begin{align}
\boldsymbol{x}^{\top} F_1 \boldsymbol{x}+2 \boldsymbol{g_1}^{\top} \boldsymbol{x}+h_1 \leq 0 \Longrightarrow \boldsymbol{x}^{\top} F_2 \boldsymbol{x}+2 \boldsymbol{g_2}^{\top} \boldsymbol{x}+h_2 \leq 0,\label{eq29} \\
\iff  \  \exists \lambda \geq 0, \  
\lambda \begin{bmatrix} F_1 & \boldsymbol{g_1} \\
\boldsymbol{g_1}^{\top} & h_1 \end{bmatrix}
- \begin{bmatrix} F_2 & \boldsymbol{g_2} \\
\boldsymbol{g_2}^{\top} & h_2 \end{bmatrix} \succeq 0. \label{eq30} \ \ \ \ \ \ \ 
\end{align}
\end{lemma}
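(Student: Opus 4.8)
The plan is to homogenize the two quadratics and reduce the equivalence to the hidden convexity of the joint range of two quadratic forms. Writing $q_i(\boldsymbol{x})=\boldsymbol{x}^{\top}F_i\boldsymbol{x}+2\boldsymbol{g_i}^{\top}\boldsymbol{x}+h_i$ for $i=1,2$ and collecting the data into the symmetric matrices
\begin{equation*}
M_i=\begin{bmatrix} F_i & \boldsymbol{g_i}\\ \boldsymbol{g_i}^{\top} & h_i\end{bmatrix}, \qquad i=1,2,
\end{equation*}
the identity $q_i(\boldsymbol{x})=\begin{bmatrix}\boldsymbol{x}^{\top} & 1\end{bmatrix}M_i\begin{bmatrix}\boldsymbol{x}^{\top} & 1\end{bmatrix}^{\top}$ handles all the bookkeeping, and the matrix appearing in \eqref{eq30} is precisely $\lambda M_1-M_2$. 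With this, the implication \eqref{eq30}$\Rightarrow$\eqref{eq29} is immediate: substituting $\boldsymbol{y}=\begin{bmatrix}\boldsymbol{x}^{\top} & 1\end{bmatrix}^{\top}$ into $\lambda M_1-M_2\succeq 0$ yields $\lambda q_1(\boldsymbol{x})-q_2(\boldsymbol{x})\ge 0$ for every $\boldsymbol{x}$, so that $q_1(\boldsymbol{x})\le 0$ forces $q_2(\boldsymbol{x})\le\lambda q_1(\boldsymbol{x})\le 0$ because $\lambda\ge 0$.

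The substance is the converse, where the strict-feasibility hypothesis enters. First I would promote the affine implication \eqref{eq29} to a homogeneous one, namely $\boldsymbol{y}^{\top}M_1\boldsymbol{y}\le 0\Rightarrow\boldsymbol{y}^{\top}M_2\boldsymbol{y}\le 0$ for all $\boldsymbol{y}=\begin{bmatrix}\boldsymbol{x}^{\top} & t\end{bmatrix}^{\top}\in\mathbb{R}^{N+1}$. For $t\ne 0$ this is just rescaling, since $\boldsymbol{y}^{\top}M_i\boldsymbol{y}=t^{2}q_i(\boldsymbol{x}/t)$; the delicate case is $t=0$, where $\boldsymbol{y}^{\top}M_i\boldsymbol{y}=\boldsymbol{x}^{\top}F_i\boldsymbol{x}$ and no affine point is directly available. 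Here I would use the Slater point $\bar{\boldsymbol{x}}$ with $q_1(\bar{\boldsymbol{x}})<0$ and study $q_i(\bar{\boldsymbol{x}}+r\boldsymbol{x})$ as $r\to\infty$, whose leading term is $r^{2}\boldsymbol{x}^{\top}F_i\boldsymbol{x}$; dividing \eqref{eq29} by $r^{2}$ and letting $r\to\infty$, with a small perturbation of $\boldsymbol{x}$ to cover the boundary case $\boldsymbol{x}^{\top}F_1\boldsymbol{x}=0$, delivers the claim at $t=0$. The core tool is then Dines's theorem: the joint range $\mathcal{D}=\{(\boldsymbol{y}^{\top}M_1\boldsymbol{y},\,\boldsymbol{y}^{\top}M_2\boldsymbol{y}):\boldsymbol{y}\in\mathbb{R}^{N+1}\}$ is a convex cone in $\mathbb{R}^{2}$. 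The homogeneous implication says exactly that $\mathcal{D}$ misses the convex set $\{(u,v):u\le 0,\,v>0\}$, so I would separate the two by a hyperplane through the origin (legitimate since $\mathcal{D}$ is a cone), obtaining $(a,b)\ne 0$ with $a\,\boldsymbol{y}^{\top}M_1\boldsymbol{y}+b\,\boldsymbol{y}^{\top}M_2\boldsymbol{y}\ge 0$ for all $\boldsymbol{y}$, i.e.\ $aM_1+bM_2\succeq 0$, while the geometry of the avoided region forces $a\ge 0$ and $b\le 0$. To finish, I would rule out $b=0$: it would give $M_1\succeq 0$ and hence $q_1(\bar{\boldsymbol{x}})\ge 0$, contradicting Slater; thus $b<0$, and $\lambda:=-a/b\ge 0$ satisfies $\lambda M_1-M_2\succeq 0$, which is \eqref{eq30}.

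The main obstacle is Dines's convexity theorem. The fact that the joint range of two real quadratic forms is convex is the ``hidden convexity'' that makes the S-procedure lossless, and it is the only genuinely nonelementary ingredient; the remaining steps are a separating-hyperplane argument together with sign bookkeeping. A secondary technical point is the limiting argument extending \eqref{eq29} to the slice $t=0$, which is precisely where the strict inequality $q_1(\bar{\boldsymbol{x}})<0$ is indispensable and which also forces the separator to be nondegenerate. Since the result is classical, in the paper I would simply cite \cite{ref42} for Dines's theorem rather than reproving it.
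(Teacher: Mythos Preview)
The paper does not supply its own proof of Lemma~\ref{lemma1}; the S-procedure is quoted verbatim as a known result with a citation to \cite{ref42}, and no argument is given. So there is nothing in the paper to compare your proposal against.

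For what it is worth, your sketch is the standard route to the lossless S-lemma: homogenize via $M_i$, invoke Dines's theorem on the joint numerical range of two quadratic forms, separate the resulting convex cone from the forbidden quadrant, and use the Slater point both to handle the $t=0$ slice and to rule out a degenerate separator. The only place that is a bit terse is the boundary case $\boldsymbol{x}^{\top}F_1\boldsymbol{x}=0$ at $t=0$; there the ``small perturbation'' phrasing hides a short case split on the linear coefficient of $r$ in $q_1(\bar{\boldsymbol{x}}+r\boldsymbol{x})$, after which dividing $q_2(\bar{\boldsymbol{x}}+r\boldsymbol{x})\le 0$ by $r^{2}$ still yields $\boldsymbol{x}^{\top}F_2\boldsymbol{x}\le 0$. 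With that filled in, the argument is correct, and your closing remark---that in a paper one would simply cite \cite{ref42} rather than reprove Dines---is exactly what the authors did.
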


\begin{prop}\label{prop7}
Let set $\mathcal B_m$ be denoted using ellipsoid $\mathcal E(E_1, \boldsymbol e_1, \boldsymbol s)$ as \eqref{eq4} and let set $ \mathcal W_l$ be denoted using ellipsoid $\mathcal E(E_2, \boldsymbol e_2, \boldsymbol s)$ as \eqref{eq4}. Then, $\mathcal B_m \subset \mathcal W_l \iff$ 
\begin{equation}\label{eq31}
\begin{aligned}
\exists \Upsilon \in  \mathbb{R}^{(N+1) \times (N+1)}, \lambda \geq 0: \ \ \ \ \ \ \ \ \ \ \ \ \ \ \ \ \ \ \ \ \ \ \ \ \ \ \ \ \\
 \begin{bmatrix} 
 \lambda E_1- E_2 & E_2^{\top}\boldsymbol{e}_{2}-\lambda E_1^{\top}\boldsymbol{e}_{1} \\
\boldsymbol{e}_{2}^{\top}E_2- \lambda \boldsymbol{e}_{1}^{\top}E_1 & \boldsymbol{e}_{1}^{\top} \lambda E_1 \boldsymbol{e}_{1}-\boldsymbol{e}_{2}^{\top} E_2 \boldsymbol{e}_{2}-\lambda +1
\end{bmatrix}\\
=\Upsilon \Upsilon^{\top}, \Upsilon(i,i) \geq 0, \  i= 1\cdots N+1.
\end{aligned}
\end{equation}
\end{prop}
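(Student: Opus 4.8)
The plan is to recast the set containment $\mathcal B_m \subset \mathcal W_l$ as an implication between two quadratic inequalities, apply the S-procedure of Lemma~\ref{lemma1}, and finally convert the resulting positive semidefinite condition into the smooth factorization constraint \eqref{eq31}. First I would observe that, using representation \eqref{eq4}, $\mathcal B_m \subset \mathcal W_l$ is equivalent to the statement that every $\boldsymbol s$ with $(\boldsymbol s-\boldsymbol e_1)^\top E_1(\boldsymbol s-\boldsymbol e_1)\le 1$ also satisfies $(\boldsymbol s-\boldsymbol e_2)^\top E_2(\boldsymbol s-\boldsymbol e_2)\le 1$. Expanding each ellipsoid inequality into the canonical form $\boldsymbol x^\top F\boldsymbol x+2\boldsymbol g^\top\boldsymbol x+h\le 0$ with $\boldsymbol x=\boldsymbol s$, I read off $F_i=E_i$, $\boldsymbol g_i=-E_i\boldsymbol e_i$ and $h_i=\boldsymbol e_i^\top E_i\boldsymbol e_i-1$ for $i\in\{1,2\}$, which places the containment exactly in the implication form \eqref{eq29}.

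Next I would verify the strict-feasibility (Slater) hypothesis of Lemma~\ref{lemma1}, namely that some $\boldsymbol x$ makes the first quadratic strictly negative. Because $E_1$ is positive definite, the set $\mathcal B_m$ has nonempty interior, and indeed $\boldsymbol x=\boldsymbol e_1$ yields the value $-1<0$. With the hypothesis satisfied, Lemma~\ref{lemma1} converts the implication into the existence of $\lambda\ge 0$ for which the matrix in \eqref{eq30} is positive semidefinite. Substituting $F_i,\boldsymbol g_i,h_i$ and using the symmetry $E_i^\top=E_i$ to simplify the off-diagonal and corner blocks reproduces, entry by entry, the $(N+1)\times(N+1)$ symmetric matrix $S$ appearing in \eqref{eq31}; this is a routine computation.

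The final step reformulates the semidefinite constraint $S\succeq 0$, which generic gradient-based NLP solvers cannot handle directly, as the existence of a factorization $S=\Upsilon\Upsilon^\top$ with nonnegative diagonal entries $\Upsilon(i,i)\ge 0$. The reverse implication is immediate, since $\boldsymbol x^\top\Upsilon\Upsilon^\top\boldsymbol x=\|\Upsilon^\top\boldsymbol x\|^2\ge 0$ for every $\boldsymbol x$, so any such $\Upsilon$ makes $S$ symmetric positive semidefinite. For the forward implication I would invoke the Cholesky factorization of a symmetric positive semidefinite matrix, which yields a lower-triangular $\Upsilon$; flipping the sign of any column with a negative diagonal entry leaves the product $\Upsilon\Upsilon^\top$ unchanged while rendering every $\Upsilon(i,i)$ nonnegative. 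Chaining the three equivalences then establishes \eqref{eq31}.

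The main obstacle I anticipate lies in this last factorization when $S$ is positive semidefinite but singular, which can occur precisely when $\mathcal B_m$ touches the boundary of $\mathcal W_l$. In that case the Cholesky factor is no longer unique and the standard positive-definite algorithm may break down, so I would appeal to the existence of a Cholesky-type decomposition for singular positive semidefinite matrices (obtainable with symmetric pivoting) to guarantee a lower-triangular factor with nonnegative diagonal. By comparison, the Slater check in the second step and the entrywise block-matrix substitution that produces \eqref{eq31} are routine.
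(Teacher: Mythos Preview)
Your proposal is correct and follows essentially the same route as the paper: rewrite $\mathcal B_m\subset\mathcal W_l$ as the quadratic implication \eqref{eq29}, verify the Slater point (the paper uses nonempty interior, you exhibit $\boldsymbol e_1$), apply Lemma~\ref{lemma1} to obtain the $(N{+}1)\times(N{+}1)$ PSD condition, and then trade $G\succeq 0$ for $G=\Upsilon\Upsilon^\top$ via Cholesky for the forward direction and $\|\Upsilon^\top\boldsymbol x\|^2\ge 0$ for the reverse. Your extra caution about the singular PSD case is well placed; note also that the sign-flipping step is unnecessary, since the Cholesky factor of a symmetric PSD matrix already has nonnegative diagonal entries by construction.
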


\begin{proof}
In representations \eqref{eq6} and \eqref{eq4}, $E_1$ and $E_2$ are positive definite, symmetric matrices, i.e., $E_1 = E_1^{\top}$, $E_2 = E_2^{\top}$. Set $\mathcal B_m$ is a convex set with nonempty interior, so $\left(\boldsymbol{s}-\boldsymbol{e}_{1}\right)^{\top} E_1\left(\boldsymbol{s}-\boldsymbol{e}_{1}\right) < 1$ holds. Thus, constraint $\mathcal B_m \subset \mathcal W_l$ is identical to stating that $ \forall \boldsymbol{x} \in \mathbb R^N, \boldsymbol{x}^{\top} E_1 \boldsymbol{x} - 2 \boldsymbol{e}_{1}^{\top} E_1 \boldsymbol{x}+ \boldsymbol{e}_{1}^{\top} E_1 \boldsymbol{e}_{1} -1 \leq 0 \Longrightarrow \boldsymbol{x}^{\top} E_2 \boldsymbol{x} - 2 \boldsymbol{e}_{2}^{\top} E_2 \boldsymbol{x}+ \boldsymbol{e}_{2}^{\top} E_2 \boldsymbol{e}_{2} -1 \leq 0 $. Based on Lemma~\ref{lemma1}, $\exists \lambda \geq 0$, for which the matrix 
\[\begin{bmatrix} 
 \lambda E_1- E_2 & E_2^{\top}\boldsymbol{e}_{2}-\lambda E_1^{\top}\boldsymbol{e}_{1} \\
\boldsymbol{e}_{2}^{\top}E_2- \lambda \boldsymbol{e}_{1}^{\top}E_1 & \boldsymbol{e}_{1}^{\top} \lambda E_1 \boldsymbol{e}_{1}-\boldsymbol{e}_{2}^{\top} E_2 \boldsymbol{e}_{2}-\lambda +1
\end{bmatrix},\] 
denoted as symbol $G$ for clarity, is a positive semidefinite matrix, i.e., $G \succeq 0$. 
Next, we prove $G \succeq 0 \iff \exists \Upsilon \in \mathbb{R}^{(N+1) \times (N+1)}$, such that $G = \Upsilon \Upsilon^{\top}, \Upsilon(i,i) \geq 0, i= 1\cdots N+1$. 

According to the Cholesky theory \cite{ref43}, $G \succeq 0$ has a decomposition of the form $G = L L^{\top}$ where $L \in \mathbb{R}^{(N+1) \times (N+1)}$ is a lower triangular matrix with nonnegative diagonal entries, so $L$ is a special case of $\Upsilon$. This proves the left-to-right implication. Based on the definition of positive semidefinite matrix, $\forall \boldsymbol{x} \in \mathbb R ^{N}, \boldsymbol{x}^{\top} \Upsilon \Upsilon^{\top} \boldsymbol{x}=\Vert \boldsymbol{x}^{\top} \Upsilon \Vert^{2} \geq 0$, let $G = \Upsilon \Upsilon^{\top}$, thus $G \succeq 0 $, which completes the right-to-left statement.
\end{proof}

\subsection{An ellipsoid contained in a polytope}\label{sec5.3}
\begin{prop}\label{prop8}
Let set $\mathcal B_m$ be denoted using ellipsoid $\mathcal E(E, \boldsymbol e, \boldsymbol s)$ as \eqref{eq4}. Let set $ \mathcal W_l$ be denoted using polytope $\mathcal P(P, \boldsymbol p, \boldsymbol s)$ as \eqref{eq3}. Then, $\mathcal B_m \subset \mathcal W_l \iff$
\begin{equation}\label{eq32}
\begin{aligned}
\sqrt{\boldsymbol{p}_{i,P}^{\top} E^{-1} \boldsymbol{p}_{i,P}}+\boldsymbol{p}_{i,P}^{\top} \boldsymbol{e} \leq p_{i}, \  i= 1\cdots N_P.
\end{aligned}
\end{equation}
\end{prop}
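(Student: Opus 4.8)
The plan is to reduce the containment of an ellipsoid inside a polytope to a finite family of ellipsoid-in-half-space conditions, each of which is already resolved by Proposition~\ref{prop5}. First I would rewrite the polytope $\mathcal W_l = \mathcal P(P, \boldsymbol p, \boldsymbol s)$ from \eqref{eq3} as an intersection of the half-spaces induced by its rows. Since the matrix inequality $P\boldsymbol s \leq \boldsymbol p$ holds componentwise precisely when $\boldsymbol p_{i,P}^\top \boldsymbol s \leq p_i$ for every index $i = 1, \ldots, N_P$, we have
\begin{equation*}
\mathcal P(P, \boldsymbol p, \boldsymbol s) = \bigcap_{i=1}^{N_P} \mathcal H^{-}(\boldsymbol p_{i,P}, p_i),
\end{equation*}
with each $\mathcal H^{-}(\boldsymbol p_{i,P}, p_i)$ defined as in \eqref{eq6}, taking the $i$-th row $\boldsymbol p_{i,P}$ of $P$ as the normal vector and the $i$-th entry $p_i$ of $\boldsymbol p$ as the offset.

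Next I would invoke the elementary set-theoretic fact that a set is contained in an intersection if and only if it is contained in every member of that intersection. Applying this to the ellipsoid $\mathcal B_m = \mathcal E(E, \boldsymbol e, \boldsymbol s)$, the containment $\mathcal B_m \subset \mathcal W_l$ is therefore equivalent to the simultaneous conditions
\begin{equation*}
\mathcal E(E, \boldsymbol e, \boldsymbol s) \subset \mathcal H^{-}(\boldsymbol p_{i,P}, p_i), \quad i = 1, \ldots, N_P.
\end{equation*}

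Finally I would apply Proposition~\ref{prop5} to each half-space individually, with the substitution $\boldsymbol\lambda = \boldsymbol p_{i,P}$ and $\mu = p_i$; this is legitimate because $E$ is positive definite, so $E^{-1}$ exists and the square-root term is well defined. Proposition~\ref{prop5} gives $\mathcal E \subset \mathcal H^{-}(\boldsymbol\lambda, \mu) \iff \sqrt{\boldsymbol\lambda^\top E^{-1} \boldsymbol\lambda} + \boldsymbol\lambda^\top \boldsymbol e \leq \mu$, and substituting the $i$-th face data yields exactly $\sqrt{\boldsymbol p_{i,P}^\top E^{-1} \boldsymbol p_{i,P}} + \boldsymbol p_{i,P}^\top \boldsymbol e \leq p_i$. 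Collecting these across all $i$ reproduces \eqref{eq32}, which completes both directions at once. I do not anticipate a genuine obstacle here: the result is essentially a corollary of Proposition~\ref{prop5}, and the only point requiring care is the initial observation that $\mathcal P$ decomposes into its defining half-spaces so that containment in the polytope is precisely the conjunction of the per-face half-space containments.
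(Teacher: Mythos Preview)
Your proposal is correct and follows essentially the same route as the paper: decompose the polytope $\mathcal W_l$ into the intersection of its $N_P$ facet half-spaces $\mathcal H^{-}(\boldsymbol p_{i,P}, p_i)$, reduce $\mathcal B_m \subset \mathcal W_l$ to the $N_P$ per-face containments, and invoke Proposition~\ref{prop5} on each. Your write-up is in fact slightly more explicit than the paper's about the set-theoretic step and the applicability of Proposition~\ref{prop5}, but the argument is the same.
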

\begin{proof}
Let $\mathcal H^{-}(\boldsymbol{\lambda}, \mu) $ be defined as stated earlier, i.e., $\mathcal H^{-}(\boldsymbol{\lambda}, \mu)=\left\{\boldsymbol{s} \in \mathbb{R}^N \mid \boldsymbol{\lambda}^\top \boldsymbol{s} \leq \mu\right\}$. From \eqref{eq3}, polytope $\mathcal W_l$ is represented as $\left\{\boldsymbol{s} \in \mathbb{R}^N \mid P \boldsymbol{s} \leq \boldsymbol{p}\right\}$, i.e., the intersection of $N_P$ half-spaces, denoted as $\mathcal W_l= \mathcal H^{-}(\boldsymbol{p}_{1,P}, p_{1}) \cap \mathcal H^{-}(\boldsymbol{p}_{2,P}, p_{2})\cdots \cap \mathcal H^{-}(\boldsymbol{p}_{N_P,P}, p_{N_P})$.  Thus, constraint $\mathcal B_m \subset \mathcal W_l$ is identical to stating 
$\mathcal B_m \subset \mathcal H^{-}(\boldsymbol{p}_{1,P}, p_{1})$, $\mathcal B_m \subset \mathcal H^{-}(\boldsymbol{p}_{2,P}, p_{2})$, $ \cdots, \mathcal B_m \subset \mathcal H^{-}(\boldsymbol{p}_{N_P,P}, p_{N_P})$ hold. Then, based on Proposition~\ref{prop5}, $\mathcal B_m \subset \mathcal W_l$ can be formulated by checking collisions between $N_P$ half-spaces and $\mathcal B_m$.    
\end{proof}

\section{Analytical evaluation of the collision avoidance formulations}\label{sec6}
To illustrate the advantages of the proposed collision avoidance formulations in efficiency, a few prominent methods in literature to model collision constraints (\ref{eq2f}) are used as a benchmark to compare. We highlight the advantages of saving computational resources in aspects of problem size and setting initial guesses.
\subsection{Problem size}\label{sec6.1}
When considering the differentiable formulations of separating constraints (\ref{eq2f}), i.e., $ \mathcal B_m \cap \mathcal O_n=\emptyset $, between two convex sets, one prominent way is reformulating the distance between them using dual techniques, such as the proposed collision formulations by Zhang et al. \cite{ref3}, where strong duality theory of convex optimization is utilized to achieve exact collision avoidance. Additionally, Helling et al. \cite{ref6} leverage Farkas' Lemma to arrive at conditions ensuring collision avoidance between two polytopes. However, these methods require introducing many dual variables to be optimized in the NLP formulation. The resulting growth in problem size can be computationally expensive. On the contrary, our formulations using the hyperplane separation theorem need fewer variables, admitting an improved computational performance compared to existing approaches.

Table~\ref{tab1} summarizes the needed number of auxiliary variables to formulate $ \mathcal B_m \cap \mathcal O_n=\emptyset $ when they are polyhedral or ellipsoidal sets. It can be seen that generally the proposed formulations need $N+1$ variables, e.g., $2+1$ variables in two dimensions and $3+1$ variables in three dimensions,  to construct the separating hyperplane between $\mathcal B_m$ and $ \mathcal O_n$, not depending on the complexity of set $\mathcal B_m $ or $\mathcal O_n$. As a contrast, the existing formulations in \cite{ref3, ref6} identify additional variables proportional to the complexity of $\mathcal B_m$ and $ \mathcal O_n$ being represented, always needing more variables than the proposed formulations. It also states that in the presence of polytopes when the number of edges increases, more variables are needed to formulate using formulations in \cite{ref3, ref6}, while the number of needed variables remains unchanged using the proposed formulations. Typically, obstacles are various nonconvex polytopes composed of many convex polytopes, so this advantage of the proposed formulations is highly beneficial for efficient collision avoidance in real-world scenarios.

\begin{table}[t]
  \centering
  \caption{The number of additional variables based on different formulations}
  \begin{threeparttable}
    \begin{tabular}{cccc}
    \toprule
    \multirow{2}[2]{*}{Methods} & Hyperplane & Dual & Farkas' \\
          &    separation theorem   &   formulations    & Lemma  \\
    \midrule
    Two polytopes  & $N+1$   & $N_{m}+N_{n}$ \tnote{1} & $N_{m}+N_{n}$ \\
    Two ellipsoids & $N+1$     & $2(N+1)$    &  - \tnote{2} \\
    Polytope, ellipsoid & $N+1$   & $N_{m}+N$  & - \\
    \bottomrule
    \end{tabular}%
        \begin{tablenotes}   
        \footnotesize 
        \item[1] The least number of faces to construct a bounded polytope in N dimension is $N+1$, i.e., $N_{m}\geq N+1, N_{n}\geq N+1$, indicating the number using Dual formulations is always larger than that using Hyperplane separation theorem in each row.
        \item[2] - means Farkas' Lemma cannot achieve it.
      \end{tablenotes}
    \end{threeparttable}
  \label{tab1}%
\end{table}%

\subsection{Geometry-based initial guess}\label{sec6.2}
Since the planning OCP \eqref{eq2} is a non-convex optimization problem,  good initialization in state variables, control variables, and auxiliary variables used to formulate collision avoidance constraints can improve the computational performance and the resulting solution quality. Much existing work has designed efficient initial guesses in state variables and control variables, e.g., Hybrid $\tx{A}^{\star}$ algorithm \cite{ref1, ref22}. Some researchers choose to initialize auxiliary variables by solving a convex optimization problem \cite{ref22, ref32}. However, it is noted that the number of auxiliary variables is very large, especially needed using the aforementioned dual formulations and Farkas' Lemma, so this numerical optimization way is not efficient. Other researchers \cite{ref1, ref33} prefer to initialize auxiliary variables with constants, e.g., zero or a small value to avoid a singularity, but this way may affect solution quality on account of not being problem-dependent.

Owing to the proposed formulations using the hyperplane separation theorem, we propose a geometry-based method to initialize auxiliary variables. According to Theorem~\ref{theorem1}, when convex sets $\mathcal B_m$ and $\mathcal O_n$ are disjoint, i.e.,  $\mathcal B_m \cap \mathcal O_n =\emptyset$, there exists a separating hyperplane, which is described by auxiliary variables, e.g., $\boldsymbol{\lambda}, \mu$ in formulation (\ref{eq13}). Rather than initializing these auxiliary variables, as an alternative, we initialize the separating hyperplane from a view of geometry. It is done by finding a middle hyperplane between $\mathcal B_m $ and $ \mathcal O_n$, i.e., let auxiliary variables be initialized with a coefficient vector describing the separating hyperplane which is orthogonal to the line crossing the sample on the initial path and the centroid\footnote{The centroid of a polytope is the arithmetic average of the coordinates of its vertices and the centroid of an ellipsoid is its center.} of an obstacle, and crosses certain point\footnote{Let $(x_1,y_1)$ and $(x_2,y_2)$ represent the coordinates of the sample and the centroid. The position of a certain point is denoted as $\gamma(x_1,y_1)+(1-\gamma)(x_2,y_2) $, where $\gamma$ is a problem-dependent weight factor. } between the sample and the centroid. An example of this initial separating hyperplane is illustrated in Fig~\ref{fig1}. Because the initial guesses in auxiliary variables using the proposed geometry-based method can be obtained by calculating analytical solutions directly, it is simple and computationally efficient.

In the previous paragraph, we introduce a general method of how to initialize the separating hyperplanes, i.e., using an orthogonal hyperplane, but the geometry-based method is not limited to this. Because the initialization of the separating hyperplanes is highly problem-dependent, the initialization method should be carefully selected to optimize computational performance in various planning cases.

\begin{figure}[!t]
\centering
\subfloat[$\gamma=0.5$]{\includegraphics[width=0.35\textwidth]{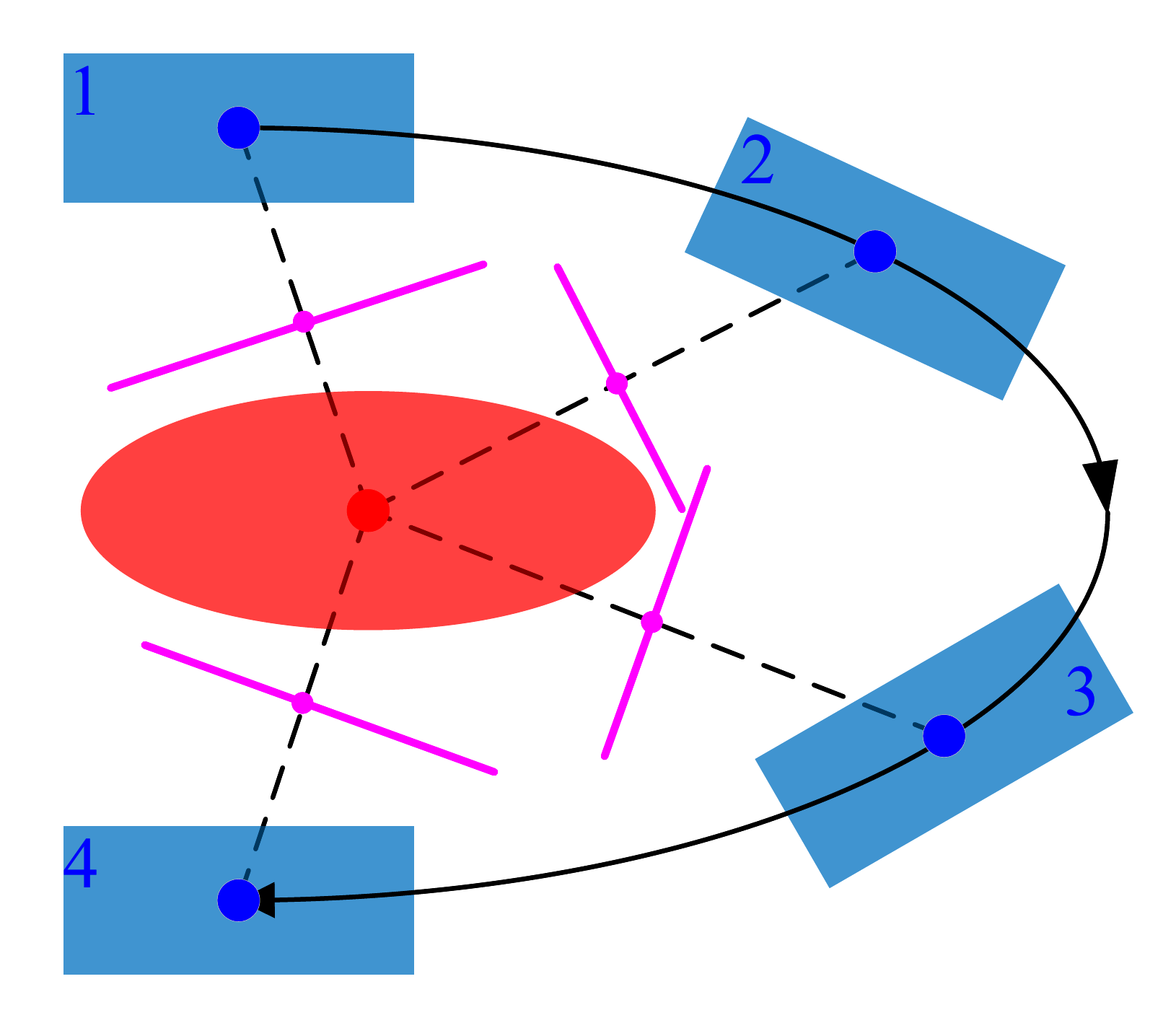}%
\label{fig1a}}
\hfill
\subfloat[$\gamma=0.75$]{\includegraphics[width=0.4\textwidth]{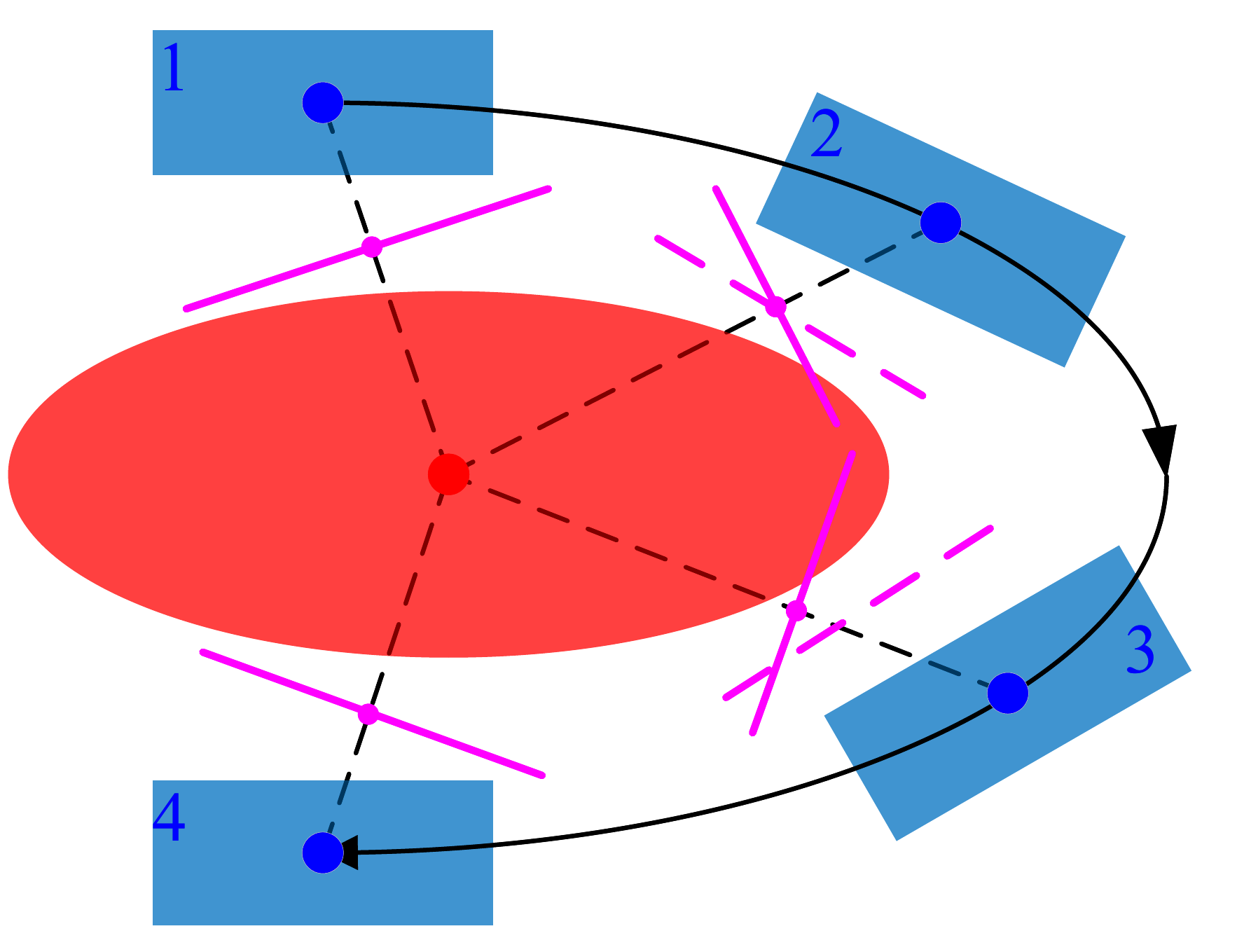}%
\label{fig1b}}
\caption{Positions of the proposed initial separating hyperplanes when weight factor $\gamma$ is set as 0.5 and 0.75, depending on the size of the obstacle. The controlled object $\mathcal B$ is denoted as a polygon, depicted in blue. The obstacle $\mathcal O$ is denoted as an ellipse, depicted in red. Centers of $\mathcal B$ and $\mathcal O$ are depicted in blue point and red point, respectively. The initial path is depicted in black solid line. The magenta lines in samples 1--4 are the initial separating hyperplanes. In (a), the size of $\mathcal O$ is smaller, we choose $\gamma=0.5$ to keep the hyperplane separating $\mathcal B$ from $\mathcal O$ in each sample, while in (b), $\gamma=0.75$ is set to keep the separating hyperplanes as far away from $\mathcal O$ as possible. Although the hyperplanes in samples 2 and 4 are not fully separating $\mathcal B$ from $\mathcal O$, such initialization may still help the solver quickly find the resulting separating hyperplanes in the neighborhood, i.e., starting from the solid lines, the solver may more easily find the magenta dotted lines.} 
\label{fig1}
\end{figure}

\section{Applications}\label{sec7}
In this section, we show two applications of the proposed unified planning framework involving containing constraints and separating constraints. One case is autonomous parking in tractor-trailer vehicles in a limited moveable environment. The other case is car overtaking within narrow curved lanes. Additionally, in each case, comparisons with the state of the art on the same premise are done to illustrate the advantages of the proposed formulations.
\subsection{Autonomous Parking for tractor-trailer vehicles}\label{sec7.1}

\subsubsection{Problem description}
Tractor-trailer vehicles are longer than standard cars, resulting in larger turning curvatures and a wider swept area, which makes their control difficult. The difficulty of parking is also significantly influenced by the parking geometry, whether it be parallel, angular, or perpendicular. Besides, in a limited nonconvex parking region, maintaining an acceptable safety distance from obstacles to prevent collisions poses a demanding task even for skilled drivers\cite{ref7, ref49}, so it is an intractable issue for tractor-trailer vehicles parking within confined surroundings. Since the execution of reverse parking presents greater complexity in typical scenarios, autonomous reverse parking in tractor-trailer vehicles is studied by exploiting separating constraints \eqref{eq12} and containing constraints \eqref{eq28}.

\subsubsection{Case description}
In the example of Fig.~\ref{fig3}(a) and Fig.~\ref{fig3}(b), the driving environment $\mathcal W$ is modeled by a convex polygon, depicted in green, denoted as $\mathcal W:= \left\{\boldsymbol{s} \in \mathbb{R}^2 \mid P_1 \boldsymbol{s} \leq \boldsymbol{p_1}\right\},$ where $P_1 =[0,1,0,-1;1,0,-1,0]^\top,\boldsymbol{p_1}=[10,10,10,6]^\top$. The obstacle $\mathcal O$ is modeled by a convex polygon, depicted in red solid. Let $V_O \in\mathbb R^{2\times 4}$ represent a matrix of vertices in $\mathcal O$, denoted as $V_O=\begin{bmatrix} 7, 7, -6, -6; -3, -10, -10, -3\end{bmatrix}$. The tractor-trailer vehicle $\mathcal B$ is modeled using a union of two convex polygons $\mathcal B_1$ and $\mathcal B_2$, depicted in magenta and blue. Polygons $\mathcal B_1$ and $\mathcal B_2$ are connected at the common articulated joint point. Because the position and orientation of the vehicle change, its set  $\mathcal B_i(\boldsymbol{\xi})$ should be written to depend on the vehicle state $\boldsymbol{\xi}$, $i=1,2$. The matrix of vertices in $\mathcal B_i(\boldsymbol{\xi})$ is represented by $V_i(\boldsymbol{\xi})\in\mathbb R^{2\times 4}$. Matrix $V_i(\boldsymbol{\xi})$ can be obtained by the rotation and translation of the initial matrix as
\begin{equation}\label{eq42}
\begin{aligned}
V_i(\boldsymbol{\xi}) =R_i(\boldsymbol{\xi})V_i(0)+\boldsymbol{T_i}(\boldsymbol{\xi})\boldsymbol{1}^\top,
\end{aligned}
\end{equation}
where $V_i(0)$ is denoted as the initial matrix, denoted as $V_1(0) =\begin{bmatrix} 1.5, 1.5,-0.5,-0.5; 1,-1,-1, 1\end{bmatrix}$, $V_2(0) =\begin{bmatrix}  0.5, 0.5, -5, -5;
1, -1, -1, 1\end{bmatrix}$. The rotation matrix is represented as $R_i(\boldsymbol{\xi})$. The translation vector is $\boldsymbol{T_i}(\boldsymbol{\xi})$.
The collision avoidance requirements, i.e.,  $\mathcal B \cap \mathcal O =\emptyset $ and $\mathcal B \subset \mathcal W$ are approached by using the proposed \eqref{eq12} and \eqref{eq28}, respectively.

\begin{figure*}[!t]
\centering
\subfloat[Initial path]{\includegraphics[width=0.3\textwidth]{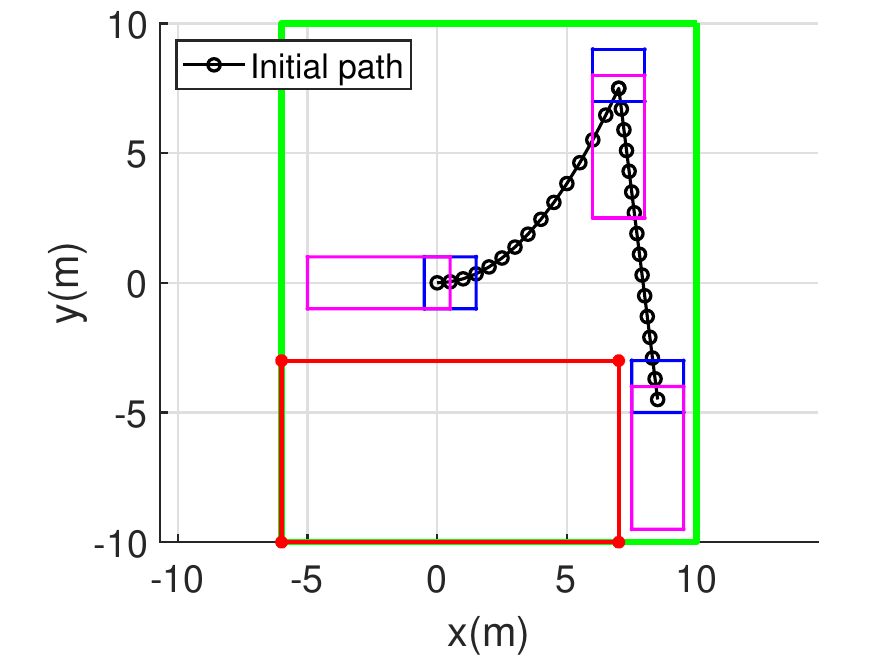}%
\label{fig3a}}
\subfloat[Parking a tractor-trailer vehicle]{\includegraphics[width=0.3\textwidth]{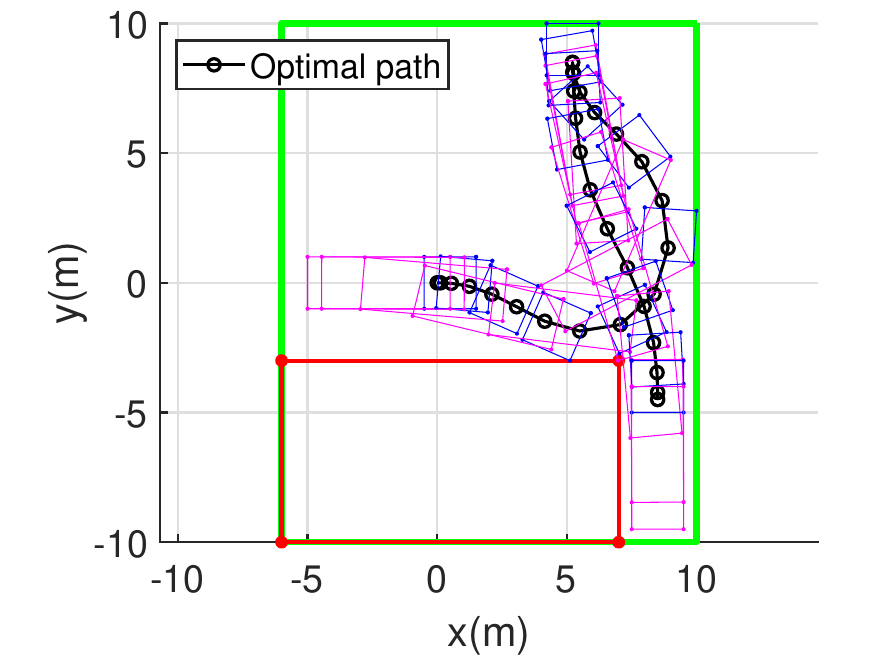}%
\label{fig3b}}
\subfloat[Parking under one obstacle]{\includegraphics[width=0.3\textwidth]{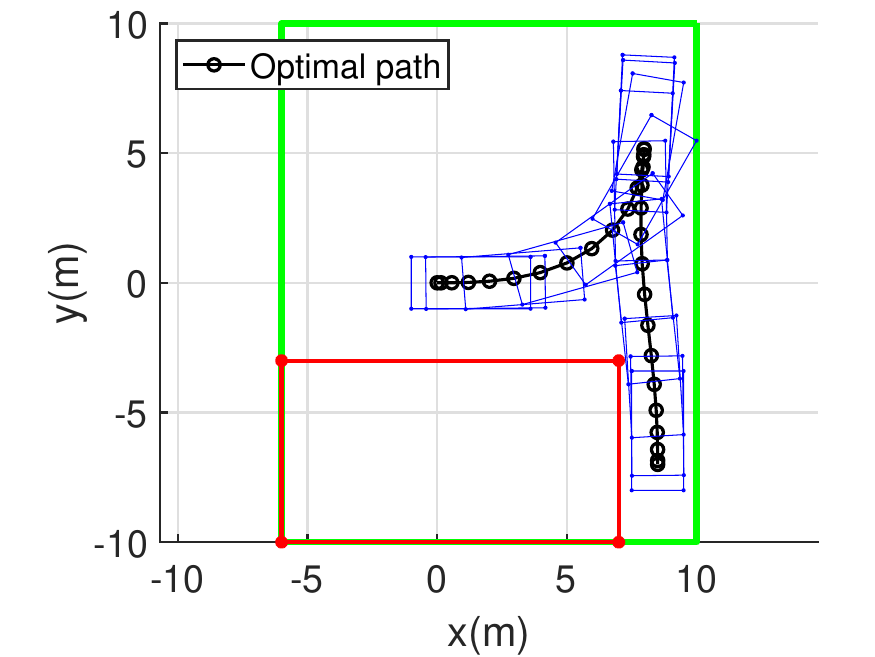}%
\label{fig3c}}
\hfill
\subfloat[Parking under two obstacles]{\includegraphics[width=0.3\textwidth]{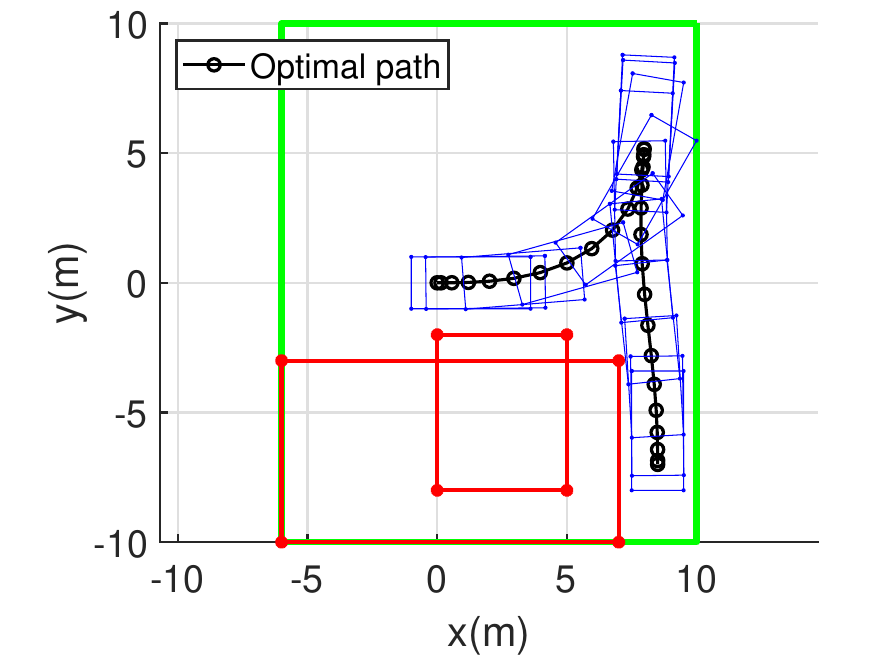}%
\label{fig3d}}
\subfloat[Parking under three obstacles]{\includegraphics[width=0.3\textwidth]{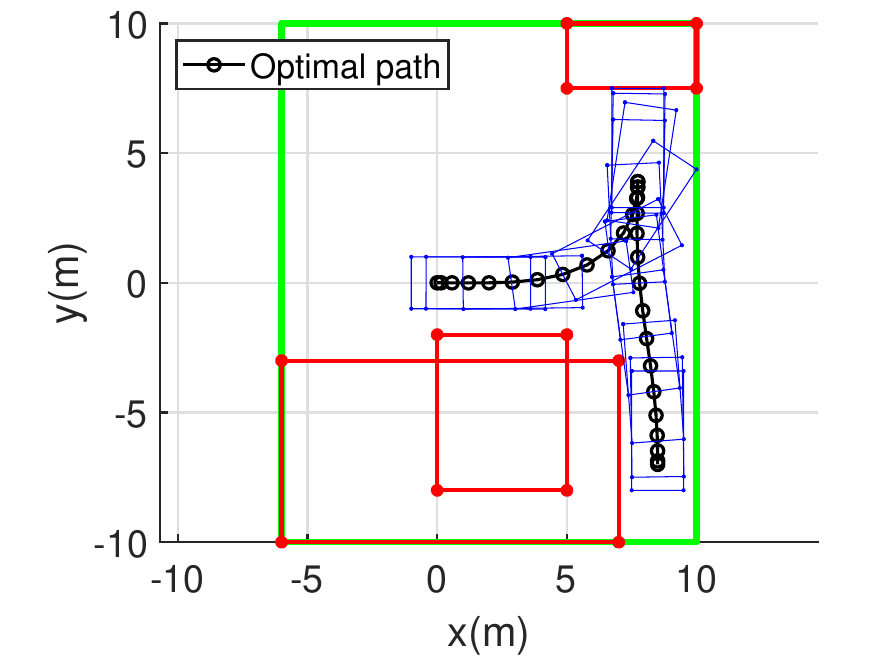}%
\label{fig3e}}
\subfloat[Parking under four obstacles]{\includegraphics[width=0.3\textwidth]{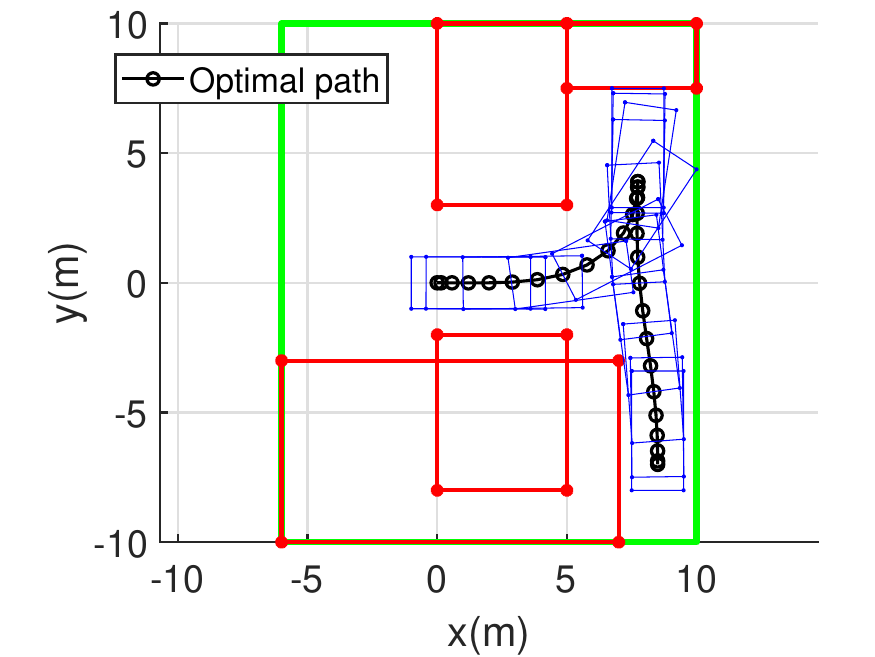}%
\label{fig3f}}
\caption{Subplot (a) illustrates the initial path using piecewise polynomials. The interim position used to obtain an initial path is denoted as $[\SI{7}{m},\SI{7.5}{m}]$. Subplot (b) shows the planned optimal path of a tractor-trailer vehicle.  In (b), the starting state $\boldsymbol{\xi_{\tx{init}}}$  is  $\begin{bmatrix}\SI{0}{m}, \SI{0}{m}, \SI{0}{\degree}, \SI{0}{\degree}, \SI{0}{m/s}, \SI{0}{\degree}\end{bmatrix}$. The ending state $\boldsymbol{\xi_{\tx{final}}}$ is $\begin{bmatrix}\SI{8.5}{m}, \SI{-4.5}{m},\SI{90}{\degree}, \SI{90}{\degree}, \SI{0}{m/s},\SI{0}{\degree}\end{bmatrix}$.  Subplots (c)--(f) are the optimal paths of a single car under different obstacles. In (c)--(f), the starting state $\boldsymbol{\xi_{\tx{init}}}$  is  $\begin{bmatrix}\SI{0}{m}, \SI{0}{m}, \SI{0}{\degree}, \SI{0}{m/s}, \SI{0}{\degree}\end{bmatrix}$. The ending state $\boldsymbol{\xi_{\tx{final}}}$ is $\begin{bmatrix}\SI{8.5}{m}, \SI{-7}{m},\SI{90}{\degree},\SI{0}{m/s},\SI{0}{\degree}\end{bmatrix}$.} 
\label{fig3}
\end{figure*}

\subsubsection{Vehicle model and objective function} \label{sec7.1.2}
The vehicle motion is described using a kinematic model, extended with a trailer \cite{ref50,ref51} as
\begin{equation}\label{eq37}
\begin{array}{l}
    \dot{x}=v \cos (\theta_1),\\
    \dot{y}=v \sin (\theta_1),\\
    \dot{\theta_1}=v\tan(\delta)/L_1,\\
    \dot{\theta_2}=v\sin(\theta_1-\theta_2)/L_2,\\
    \dot{v}=a,\\
    \dot{\delta}=\omega,
\end{array}
\end{equation}
where $L_1=\SI{1}{m}$ and $L_2=\SI{4.5}{m}$ are wheelbases of the tractor and the trailer, respectively. The position of the tractor-trailer joint is identical to the vector $\left(x,y\right)$, and $\theta_1$, $\theta_2$ represent yaw angles of the tractor and the trailer with respect to the horizontal axle. The velocity and acceleration of the tractor-trailer joint are $v$ and $a$. The steering angle and the gradient of the steering angle of the tractor are denoted by $\delta$ and $\omega$, respectively. The state and control vectors can be summarized as
\begin{equation}\label{eq38}
\begin{array}{c}
\boldsymbol{\xi} = \begin{bmatrix}x & y & \theta_1 & \theta_2 & v &\delta \end{bmatrix}^{\top},
\boldsymbol{u} = \begin{bmatrix}a & \omega \end{bmatrix}^{\top},  \\
\end{array}
\end{equation}
so the vehicle dynamics can be denoted as $\dot{\boldsymbol{\xi}} = f\left(\boldsymbol{\xi},\boldsymbol{u}\right)$. The feasible values of $\theta_1,\theta_2,v,\delta$ are limited to $|\theta_1| \leq \SI{180}{\degree}$, $|\theta_2| \leq \SI{180}{\degree}$, $ |v| \leq \frac{5}{3.6} \si{m/s^2}$, $|\delta| \leq 40 \si{\degree}$. The feasible inputs are given by $|a|\leq \SI{1}{m/s^2}$ and $|\omega|\leq \SI{5}{\degree/s^2}$. Specifically, the articulated joint angle at the tractor-trailer joint is limited using the constraint
\begin{equation}\label{eq39}
\vert\theta_1-\theta_2\vert \leq 60 \si{\degree}. 
\end{equation}
In parking cases, we expect the autonomous vehicle to complete the parking maneuvers subjected to minimum traveling time. However, the parking duration, denoted as $t_f$, is not known in advance. To address this problem, We introduce a new independent variable $\tau \in [0, 1]$, and express the time as $t=t_f \tau$, where $t_f\geq 0$ is a scalar optimization variable that needs to be added to the problem. Thus, the vehicle dynamics in \eqref{eq37} become a function of $\tau$ as
\begin{equation}\label{eq40}
\frac{\tx{d}\boldsymbol{\xi}(\tau)}{\tx{d}\tau} = t_f f(\boldsymbol{\xi}(\tau), \boldsymbol{u}(\tau)).
\end{equation}

Apart from the penalty on traveling time, the control input $a$ and $\omega$ should be small to yield smooth trajectories. Thus, a typical time-energy cost function in OCP \eqref{eq2} is formulated as 
\begin{equation}\label{eq41}
\begin{aligned}
 \underset {\boldsymbol{\xi}, \boldsymbol{u}, t_f,(\cdot)}{\text{min}} t_f \left(r + \int_{0}^{1} \left\Vert \boldsymbol{u}\left(\tau\right)\right\Vert_{Q}^{2} \tx{d}\tau\right),
\end{aligned}
\end{equation}
where $r$ is a penalty factor that trades the parking time with the rest of the cost. The notation \(\left\Vert \cdot\right\Vert_{Q}^{2}\) represents squared Euclidean norm weighted by matrix \(Q\). For the studied cases we choose $r=1, Q={\rm diag}(100, 200)$.

The OCP (\ref{eq2}) is generally reformulated as a discrete NLP problem that can be solved using off-the-shelf solvers. The whole region $[0,1]$ is discretized into \( {k}_{f}+1\) parts, i.e., \( 1 = {k}_{f} \Delta \tau\), where $\Delta \tau$ is the sampling interval. Variable $\tau$ is replaced with \( \tau = {k} \Delta \tau\), where ${k}=0$, $\ldots$, \( {k}_{f}\). Thus, the OCP (\ref{eq2}) involving collision avoidance can be formulated as a discrete NLP problem
{\allowdisplaybreaks
\begin{subequations}\label{eq36}
\begin{align}
\underset {\boldsymbol{\xi}, \boldsymbol{u},t_f,(\cdot)}{\text{min}} & \ r t_f + \sum_{k=0}^{k_f-1} \tilde{\ell}(\boldsymbol{u}(k),t_f)\label{eq36a}\\
\text {s.t.}  \ \ & \boldsymbol{\xi}(0)=\boldsymbol{\xi_{\tx{init}}},\ \boldsymbol{\xi}(k_\tx{f})=\boldsymbol{\xi_{\tx{final}}}, \label{eq36b}\\
 & \boldsymbol{\xi}(k+1) =\tilde{f}\left(\boldsymbol{\xi}(k),\boldsymbol{u}(k),t_f\right), \label{eq36c}\\
 & \boldsymbol{\xi}(k) \in \mathcal{X},\ \boldsymbol{u}(k) \in \mathcal{U}, \label{eq36d}\\
 & (\ref{eq12}), (\ref{eq28}), \label{eq36e}
\end{align}
\end{subequations}}%
where symbol \( (\cdot )\) is a shorthand notation for decision variables that need to be added to the vector of existing optimization variables, e.g., $\mu$ in formulation (\ref{eq12}). Symbol \(\tilde{f}\) denotes the discretized model dynamics of \eqref{eq40} and symbol $\tilde{\ell}$ is the discretized form of the state cost $t_f\left\Vert \boldsymbol{u}\left(\tau\right)\right\Vert_{Q}^{2}$ in function \eqref{eq41}. In the simulation, the 4-th order Runge-Kutta method \cite{ref33} is chosen to express the discretized $\tilde{\ell}$ and \(\tilde{f}\).

Since the resulting NLP (\ref{eq36}) is nonconvex, multiple local solutions exist depending on different initial guesses. An initial guess in the neighborhood of a local solution is more likely to cause the NLP solver to return that local solution. In a reverse parking scenario, we use piecewise polynomials to initialize states $x,y$. In the example of Fig~\ref{fig3}(a) and Fig~\ref{fig3}(b), we expect that the vehicle parks into the garage by reversing only once, so the initial guess in positions is defined by using a second-order polynomial from the starting position to an interim position and a linear function from the interim position and ending position (see Fig.~\ref{fig3}(a)). The yaw angle is initialized as the angle between a line tangent to the polynomial trajectory at each sample and the horizontal axle. Other states and inputs are initialized as zero. As mentioned in Section~\ref{sec6.2}, let the initial separating hyperplane, which is used to initialize auxiliary variables $\boldsymbol{\lambda}$, $\mu$ in formulation \eqref{eq12}, be initialized with the hyperplane orthogonal to the line crossing the sample on the polynomial trajectory and the centroid of an obstacle. In the example, we set $\gamma=0.75$ to position the initial hyperplane near the vehicle. Finding an initial guess of parking time $t_f$ is challenging and highly problem-dependent, so for this reverse parking case, we use an estimated value of \SI{50}{s} to warmly start $t_f$ \cite{ref3,ref22}. 

Simulations are conducted in MATLAB R2020a and executed on a laptop with AMD R7-5800H CPU at 3.20 GHz and 16GB RAM. Since we use a multiple-shooting approach, the problem is formulated as a collection of $k_\tx{f}$ phases. In this case, $k_\tx{f}=30$ phases are chosen. The problems are then implemented using CasADi \cite{ref46}. The resulting discrete NLP (\ref{eq36}) is solved using solver IPOPT \cite{ref47}.

\subsubsection{Solution}
Simulation results are visually interpreted in Fig.~\ref{fig3}(b). It can be seen that the planned optimal path is smooth and collision-free. Vertices of polygons $\mathcal B_1$ and $\mathcal B_2$ do not enter into obstacle $\mathcal O$, and vertices of $\mathcal O$ are prevented from passing the edges of $\mathcal B_1$ and $\mathcal B_2$. It can be noticed that since the parking environment $\mathcal W$ is limited, the vehicle keeps close to the environment boundaries to allow larger turning curvatures, but it is contained within $\mathcal W$ all the time. In the resulting solution, the vehicle conducts a reverse behavior to adjust its pose to enter into the confined parking place. Additionally, the implementing time to solve problem \eqref{eq36} using IPOPT is \SI{2.264}{s} and the resulting parking duration is \SI{69.89}{s}.

\begin{table}[t]
  \centering
  \caption{The result of computational cost by using different methods}
  \begin{threeparttable}
    \begin{tabular}{ccrrrr}
    \toprule
    \multirow{2}[2]{*}{Performance} & \multirow{2}[2]{*}{Algorithms} & \multicolumn{4}{c}{The number of placed obstacles} \\ 
    \cline{3-6}
    & & One & Two & Three & Four \\
    \midrule
    \multirow{3}[2]{*}{\parbox{20mm}{Number of auxiliary variables to be optimized}}  & proposed     & \textbf{306}& \textbf{396} & \textbf{486} & \textbf{576} \\
          & \cite{ref3}     & 456   & 696   & 936   & 1176 \\
          & \cite{ref6}     & 456   & 696   & 936   & 1176 \\
    \midrule
    \multirow{3}[2]{*}{Solving time (s) } & proposed     & \textbf{1.92} & \textbf{1.336} & \textbf{2.239} & \textbf{3.048} \\
          & \cite{ref3}     & 2.03  & 3.008 & 20.086 & 19.977 \\
          & \cite{ref6}     & 4.098 & 4.66  & 18.449 & 18.701 \\
    \midrule
    \multirow{3}[2]{*}{Objective value} & proposed     & 73.45 & 73.45 & 75.13 & 75.13 \\
          & \cite{ref3}     & 73.45 & 73.45 & \textcolor[rgb]{ 1,  0,  0}{77.702}\tnote{1} & 75.13 \\
          & \cite{ref6}     & 73.45 & 73.45 & 75.13 & 75.13 \\
    \midrule
    \multirow{3}[2]{*}{$t_f$ (s)} & proposed     & 52.56 & 52.56 & 51.55 & 51.55 \\
          & \cite{ref3}    & 52.56 & 52.56 & \textcolor[rgb]{ 1,  0,  0}{54.17} & 51.55 \\
          & \cite{ref6}     & 52.56 & 52.56 & 51.55 & 51.55 \\
    \bottomrule
    \end{tabular}%
        \begin{tablenotes}   
        \footnotesize       
        \item[1] The red values represent the obtained other local solutions based on the same initial guess. 
      \end{tablenotes}
  \label{tab3}%
  \end{threeparttable}
\end{table}%

\subsubsection{Comparisons in problem size} \label{sec7.1.5}
This part is used to support the advantage of improving computational performance on account of admitting a reduced problem size compared to existing approaches. It is noted that the nonconvex parking problem \eqref{eq36} may converge to multiple local solutions even if based on the same initial guess. Indeed, a vehicle may take infinitely many paths to reach the goal, especially for a tractor-trailer vehicle with more states. However, we expect to compare the computational performance of the proposed algorithms with the state of the art under the condition of converging to the same local solution, so we design a new case of a single-car parking problem. Here, we use a simplified vehicle model of \eqref{eq37}, with $L_1=2.6$. The vehicle dynamics is modelled with the reduced state vector $\boldsymbol{\xi} = \left[x,y,\theta_1,v,\delta \right]^{\top}$ and control vector
$\boldsymbol{u}\left(s\right) = \left[a,\omega\right]^{\top} $. Let the initial matrix $V_1(0)$ be denoted as $V_1(0) =\begin{bmatrix} 3.6,3.6,-1,-1;
1,-1,-1,1\end{bmatrix}$.

To keep the same premise of the NLP configuration, the initial guess in states $x,y,\theta_1$ is a linear function (starting from the beginning and ending at the final values). The rest of the states and inputs are set as zero. Let $\boldsymbol{\lambda}$ in formulation \eqref{eq12} be initialized with a positive value vector to prevent a singularity and $\mu$ be set to zero. We conduct numerical experiments by comparing formulation \eqref{eq12} with the formulations in Proposition 3 of \cite{ref3} and in Section III-A-2 of \cite{ref6}. Others remain unchanged in NLP \eqref{eq36}.  

We design four scenarios to test collision avoidance with different placement of obstacles. The resulting optimal trajectories are illustrated in Fig.~\ref{fig3}(c)--(f). The computational results are shown in Table~\ref{tab3}. It can be seen that these planned trajectories are collision-free and smooth. Apart from the scenario of three obstacles, the solver converges to the same local solution using different methods. Alternatively, we can verify the comparison results in the other three scenarios. The primal observation in Table \ref{tab3} is the least number of auxiliary variables by exploiting the proposed formulation (\ref{eq12}) in each scenario,  which illustrates that using the proposed formulation (\ref{eq12}) shows a considerable reduction in the problem size of NLP formulations compared to the other two existing formulations. From Fig.~\ref{fig3}(c) to Fig.~\ref{fig3}(f), we increase another three polygonal obstacles orderly, and it can be seen that the number of variables experiences a relatively low increase using the proposed formulation (\ref{eq12}) but a sharp increase using formulations in \cite{ref3, ref6}. In Fig.~\ref{fig3}(c)--(f), our method needs $3 \times k_f = 90$ auxiliary variables ($\boldsymbol{\lambda} \in \mathbb{R}^2, \mu \in \mathbb{R}$ in formulation \eqref{eq12}) per obstacle. On the contrary, existing formulations in \cite{ref3, ref6} identify variables associated with the edges of a vehicle and an obstacle, depending on the number of the total edges of the vehicle and all obstacles, i.e., adding $(4+4)\times k_f =240$ auxiliary variables per obstacle. The comparison of needed additional variables states that when the number of edges of each obstacle increases, more variables are needed to formulate constraints of \cite{ref3, ref6}, while the number of variables by the proposed formulation (\ref{eq12}) remains unchanged. Besides, it can be seen that as more obstacles are considered, it takes more time to solve an NLP. In each scenario, since the problem size using the proposed algorithm is smaller, solvers can converge to the same local optimums faster, which shows that the proposed method is more efficient compared with other algorithms.

\subsection{Overtaking in curved lanes}\label{sec7.2}

\begin{figure*}[!t]
\centering
\subfloat[Optimal trajectory in Scenario A]{\includegraphics[width=0.5\textwidth]{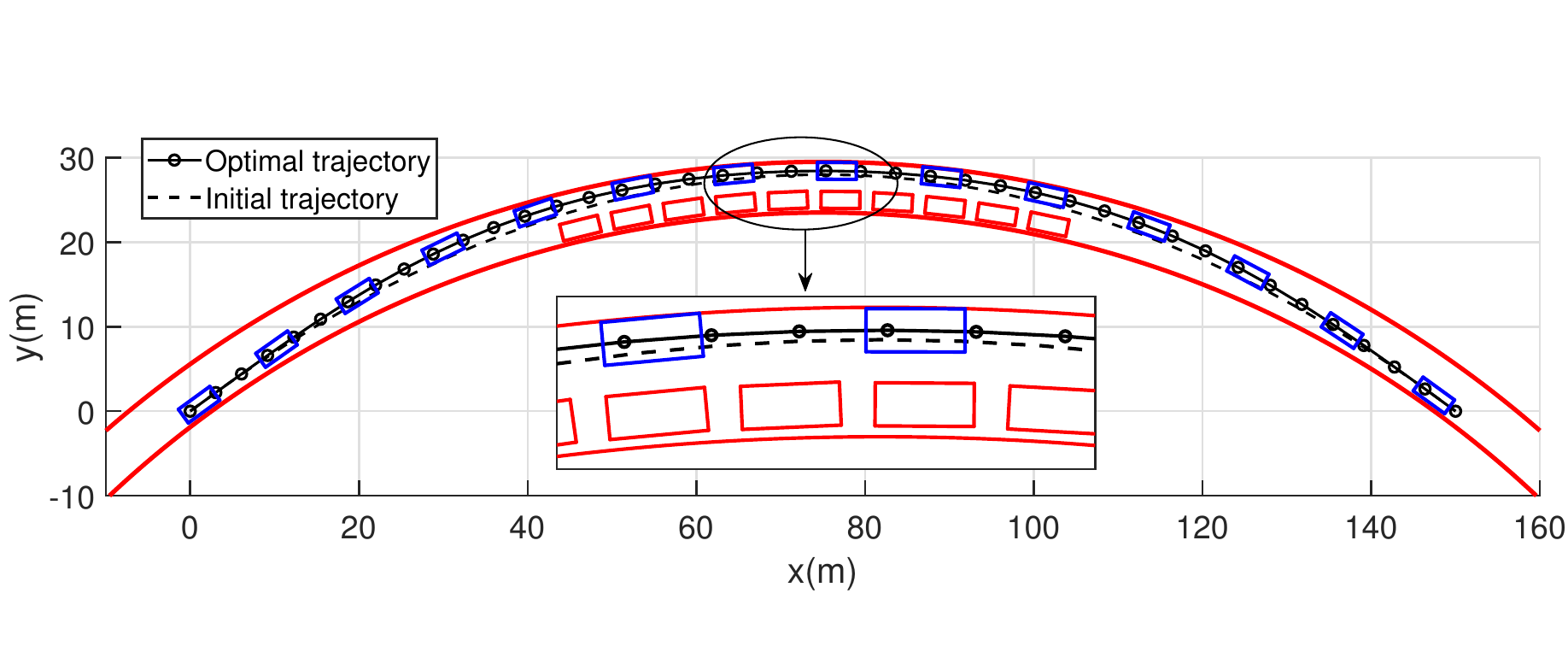}%
\label{fig2a}}
\hfil
\subfloat[The resulting states in Scenario A]{\includegraphics[width=0.5\textwidth]{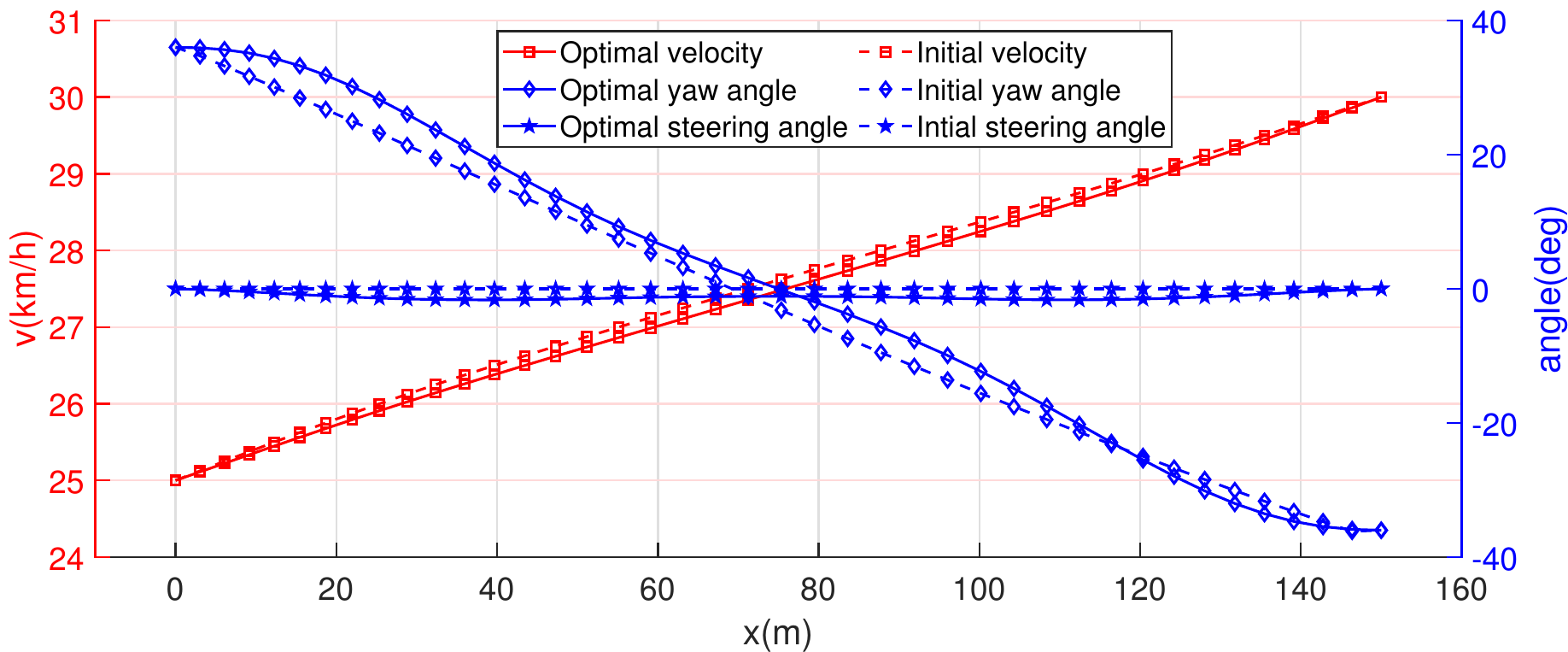}%
\label{fig2b}}
\hfil
\subfloat[Optimal trajectory in Scenario B]{\includegraphics[width=0.5\textwidth]{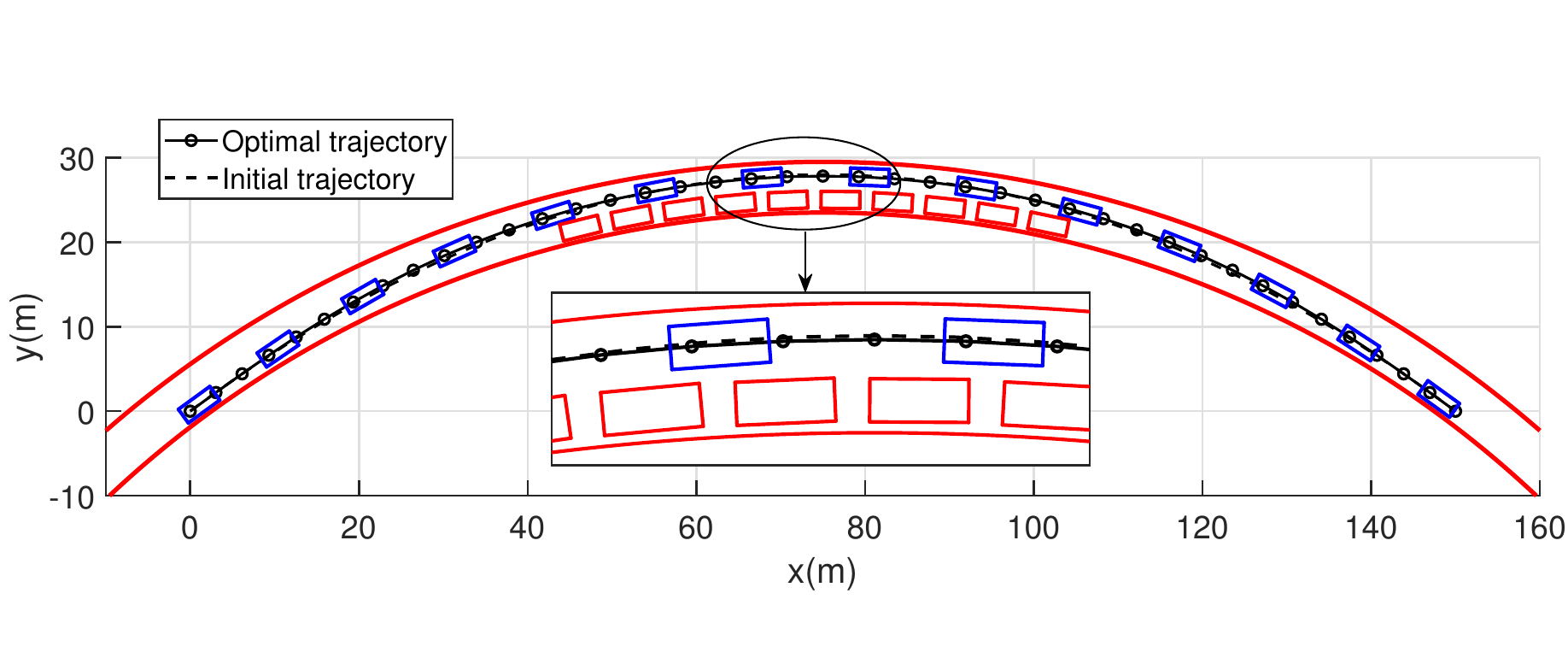}%
\label{fig2c}}
\hfil
\subfloat[The resulting states in Scenario B]{\includegraphics[width=0.5\textwidth]{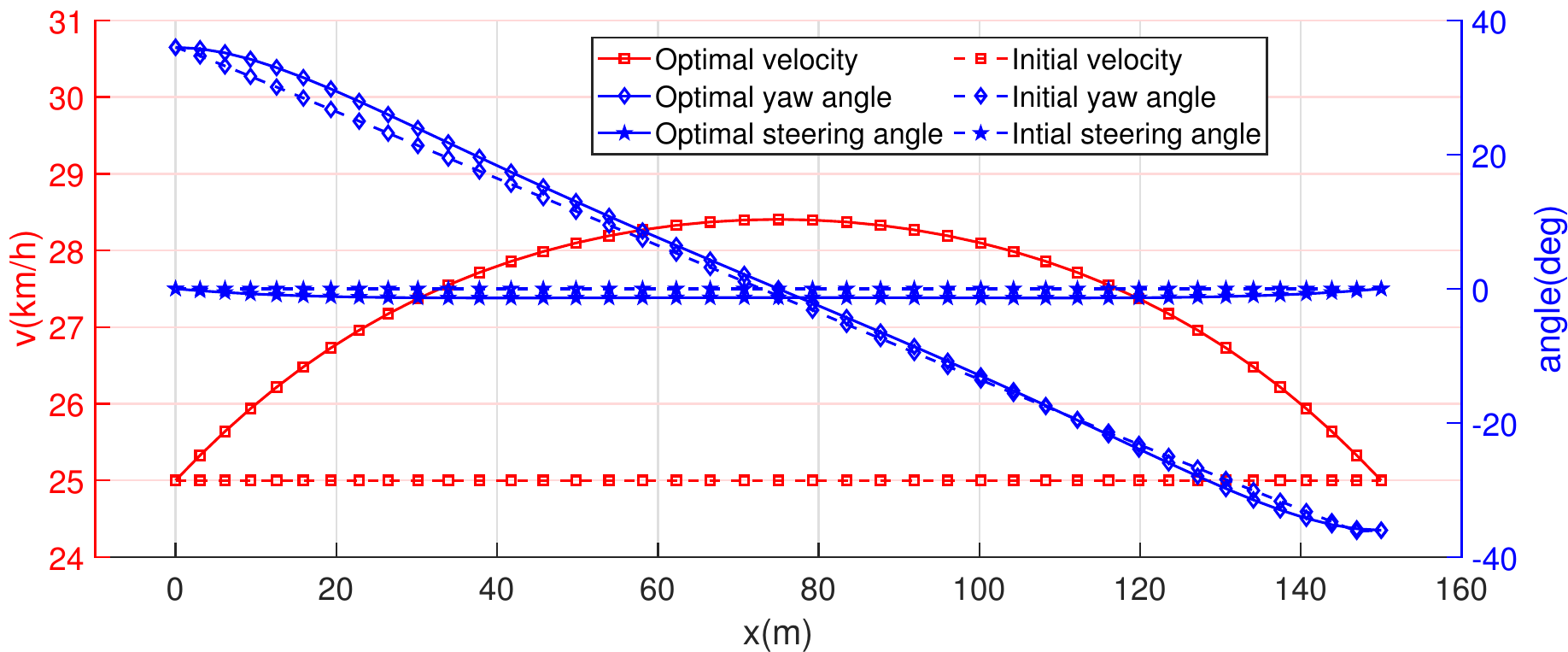}%
\label{fig2d}}
\caption{The resulting trajectory and states in different scenarios. In subplot (a) and (c), the curved lane is surrounded by two curves depicted in red solid. The boundary of obstacle $\mathcal O_1$ is denoted as the inner red curve. Positions of moving obstacle $\mathcal O_2$ are illustrated using red polygons. Positions of the overtaking process of the ego vehicle $\mathcal B$ are illustrated using blue polygons. In Scenario A and B, the ego vehicle $\mathcal B$ plans to overtake the leading obstacle $\mathcal O_2$ in the constrained curved lane. In Scenario A, the starting state $\boldsymbol{\xi_{\tx{init}}}$ is $\begin{bmatrix}\SI{0}{m}, \SI{0}{m}, \SI{36}{\degree}, \SI{25}{km/h}, \SI{0}{\degree}\end{bmatrix}$ and the ending state $\boldsymbol{\xi_{\tx{final}}}$ is $\begin{bmatrix}\SI{150}{m}, \SI{0}{m},\SI{-36}{\degree},\SI{30}{km/h},\SI{0}{\degree}\end{bmatrix}$. In Scenario B, the starting state $\boldsymbol{\xi_{\tx{init}}}$ is $\begin{bmatrix}\SI{0}{m}, \SI{0}{m}, \SI{36}{\degree}, \SI{25}{km/h}, \SI{0}{\degree}\end{bmatrix}$ and the ending state $\boldsymbol{\xi_{\tx{final}}}$ is $\begin{bmatrix}\SI{150}{m}, \SI{0}{m},\SI{-36}{\degree},\SI{25}{km/h},\SI{0}{\degree}\end{bmatrix}$. The initial paths and the planned optimal paths are depicted in subplot (a) and (c). The designed initial states and the resulting states are depicted in subplot (b) and (d). } 
\label{fig2}
\end{figure*}

\subsubsection{Problem description}
It is common for a vehicle to drive in curved lanes, e.g., trucks safely traveling on mountain roads and cars racing in closed-loop tracks \cite{ref31,ref44,ref45}. These vehicles are expected to reside within a narrow curved lane, not colliding with moving obstacles. Typically, it is difficult to overtake a moving obstacle for intelligent vehicles in a narrow curved environment, especially for a long-size vehicle. To approach this issue, we use separating constraints \eqref{eq12}, \eqref{eq21} and containing constraints \eqref{eq28} in the proposed planning framework. 

\subsubsection{Case description}
In the examples of Fig.~\ref{fig2}(a) and Fig.~\ref{fig2}(c), the driving environment $\mathcal W$ is modeled by an ellipse, depicted in outer red solid line, denoted as {$\mathcal W:=\left\{\boldsymbol{s} \in \mathbb{R}^2 \mid\left(\boldsymbol{s}-\boldsymbol{e_1}\right)^{\top} E_1 \left(\boldsymbol{s}-\boldsymbol{e_1}\right) \leq 1\right\},$} where $ E_1= \text{diag}(1/129.5^2,1/129.5^2)$, $\boldsymbol{e_1}=[75,-100]^\top$. The obstacle $\mathcal O_1$ is modeled by an ellipse denoted as $\mathcal O_1:=\left\{\boldsymbol{s} \in \mathbb{R}^2 \mid\left(\boldsymbol{s}-\boldsymbol{e_2}\right)^{\top} E_2\left(\boldsymbol{s}-\boldsymbol{e_2}\right) \leq 1\right\}$, where $ E_2= \text{diag}(1/123.5^2,1/123.5^2)$, $\boldsymbol{e_2}=[75,-100]^\top$. It is illustrated by the inner red solid line in the figure. 
It can be noticed that the curved lane is constructed with an intersection of ellipse $\mathcal W$ and ellipse $\mathcal O_1$, i.e., $\mathcal W \cap  \mathcal O_1$. Another moving obstacle is denoted as a polygon $\mathcal O_2$ of size $\SI{4.6}{m} \times \SI{2}{m}$. We assume obstacle $\mathcal O_2$ moves slowly in a predefined path, which is defined as a uniform circular motion with \SI{1.32}{\degree/s}. It is illustrated by a red solid polygon in Fig.~\ref{fig2}(a) and Fig.~\ref{fig2}(c). The ego vehicle $\mathcal B$ is modeled as a polygon of the same size as $\mathcal O_2$, depicted in blue. The matrix of vertices in $\mathcal B(\boldsymbol{\xi})$ is represented by $V(\boldsymbol{\xi})\in\mathbb R^{2\times 4}$.
In this overtaking planning problem, the ego vehicle should reside in the curved lane and not collide with obstacles, represented as  $\mathcal B \cap \mathcal O_1 =\emptyset $, $\mathcal B \cap \mathcal O_2 =\emptyset $, and  $\mathcal B \subset \mathcal W$, which are approached by using the proposed formulations \eqref{eq21}, \eqref{eq12},  and \eqref{eq28}. To support the efficacy of the proposed methods, we design two different overtaking scenarios, i.e., the vehicle keeps accelerating in Scenario A while in Scenario B the vehicle first speeds up and then slows down to its original speed.

\begin{figure*}[!t]
\centering
\subfloat[Tangent hyperplanes in scenario A]{\includegraphics[width=0.5\textwidth]{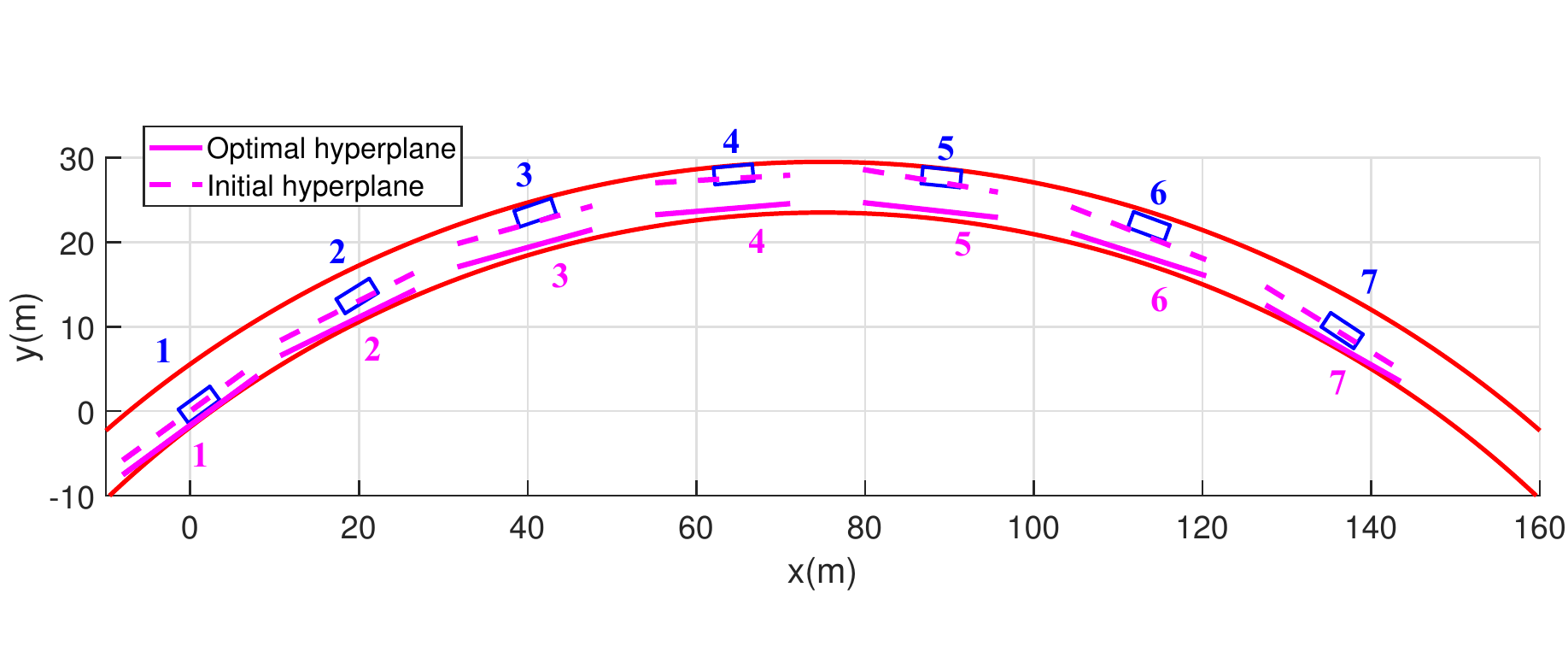}%
\label{fig4a}}
\hfil
\subfloat[Orthogonal hyperplanes in scenario A]{\includegraphics[width=0.5\textwidth]{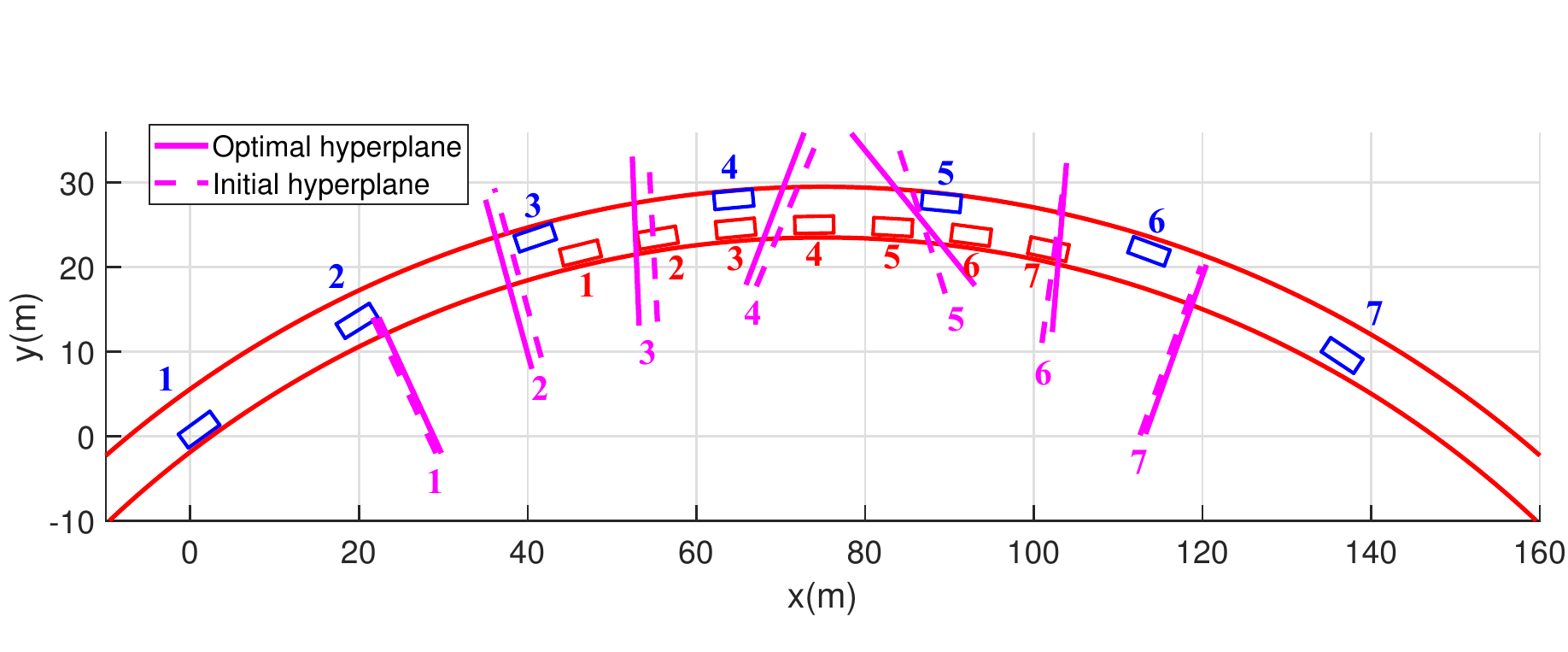}%
\label{fig4b}}
\hfil
\subfloat[Tangent hyperplanes in scenario B]{\includegraphics[width=0.5\textwidth]{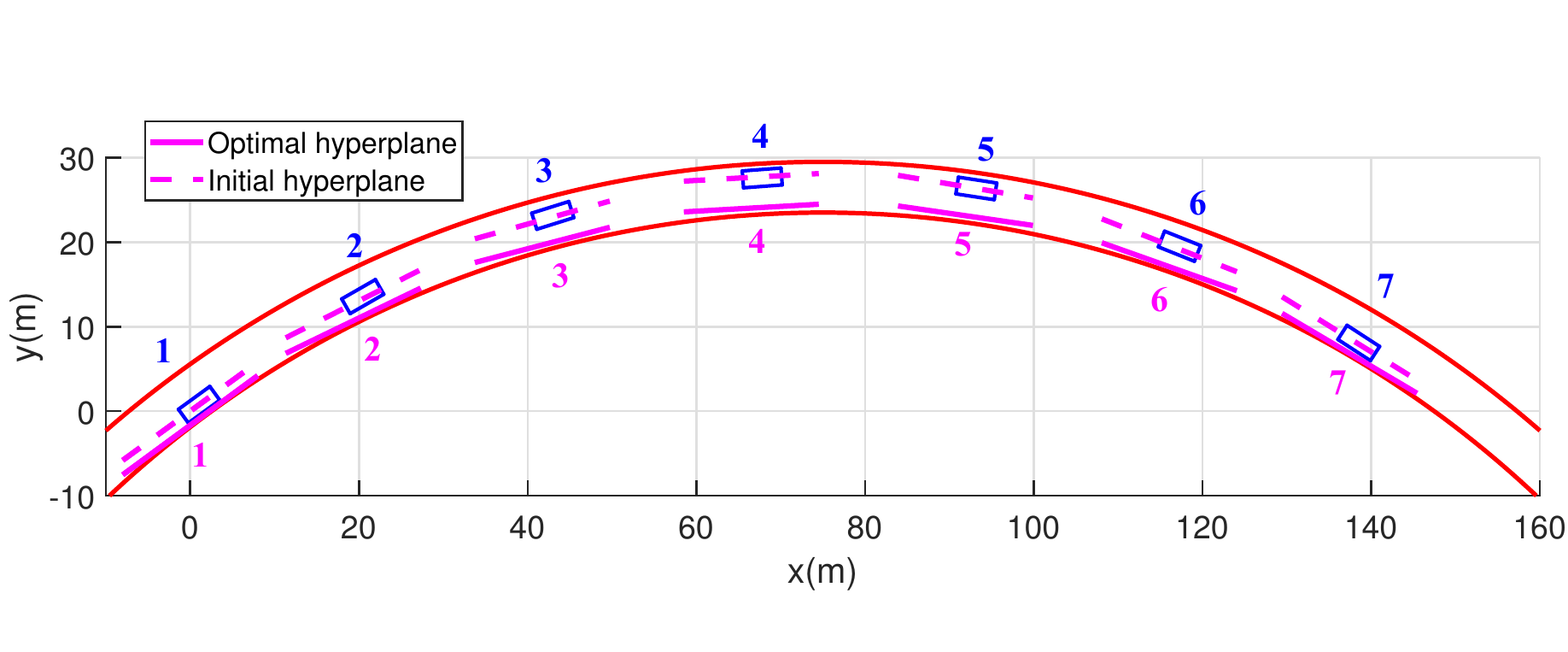}%
\label{fig4c}}
\hfil
\subfloat[Orthogonal hyperplanes in scenario B]{\includegraphics[width=0.5\textwidth]{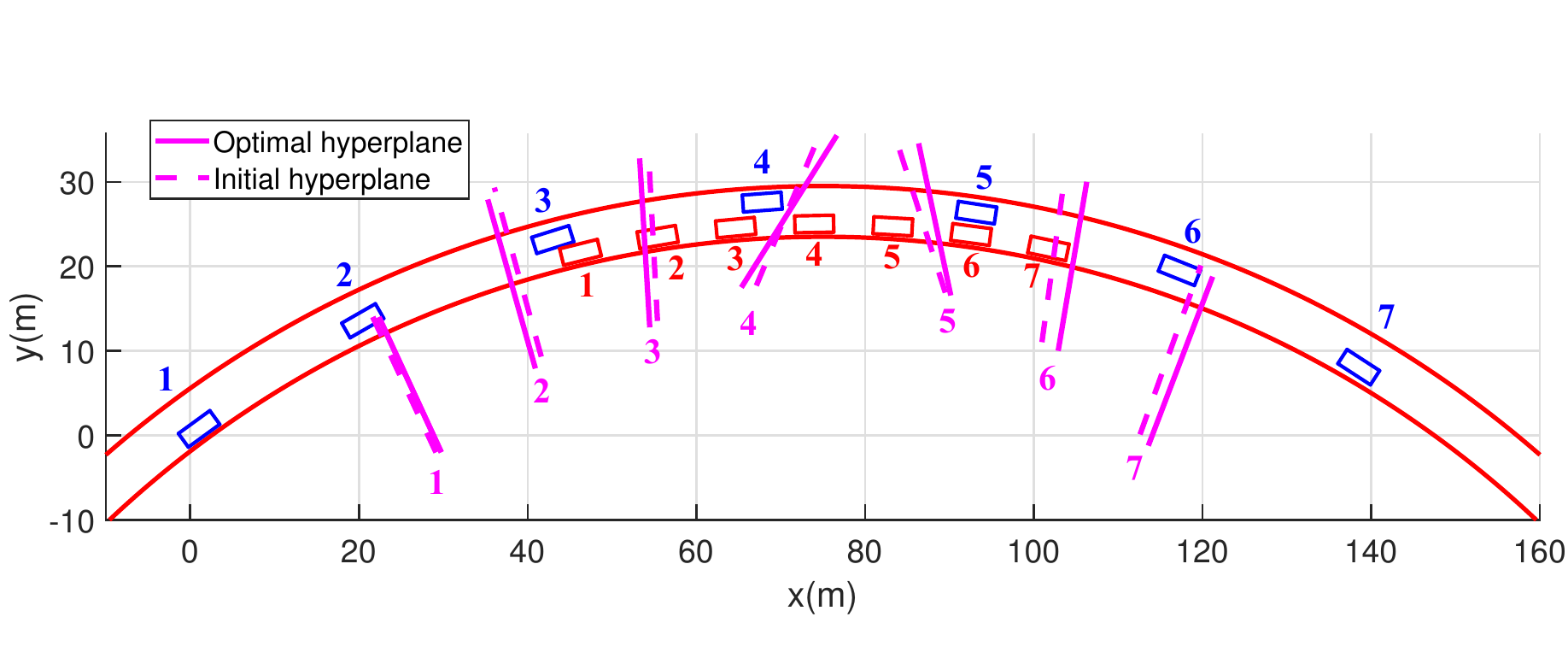}%
\label{fig4d}}
\caption{The hyperplanes to initialize auxiliary variables in constraints \eqref{eq21} and \eqref{eq12} are depicted in magenta dotted lines. The resulting hyperplanes are depicted in magenta solid lines. In (b) and (d) positions of moving obstacle $\mathcal O_2$ are depicted in red polygons. Subplots (a)-(b) and subplots (c)-(d) illustrate positions of the initial hyperplanes and resulting hyperplanes in seven samples in Scenario A and Scenario B, respectively. It can be seen that in each sample the initial hyperplane is close to the resulting hyperplane and the resulting hyperplane can separate the vehicle from obstacle $\mathcal O_1$ in (a) and (c) and separate the vehicle from moving obstacle $\mathcal O_2$ in (b) and (d).} 
\label{fig4}
\end{figure*}

\subsubsection{Vehicle model and cost function}  \label{sec7.2.3}
As Section~\ref{sec7.1.5} does, we use a simplified vehicle model of \eqref{eq37}, i.e., with the reduced state vector $\boldsymbol{\xi} = \left[x,y,\theta,v,\delta \right]^{\top}$ and control vector
$\boldsymbol{u}\left(s\right) = \left[a,\omega\right]^{\top} $, where $\theta$ represents the yaw angle with respect to the horizontal axle. The feasible values of $\theta,v,\delta$ are limited to $|\theta| \leq \SI{85}{\degree}$, $ \frac{5}{3.6} \si{m/s^2} \leq v \leq \frac{60}{3.6} \si{m/s^2}$, $|\delta| \leq 45 \si{\degree}$. Additionally, the feasible inputs are given by $\SI{-1}{m/s^2} \leq a\leq \SI{2}{m/s^2}$ and $|\omega|\leq \SI{5}{\degree/s^2}$.

In this overtaking case, we expect the vehicle to drive smoothly to a target position subject to a given time \( t_{f}\). A discretization procedure can then similarly be applied as proposed in Section~\ref{sec7.1.2}, i.e., the given time \( t_{f}\) is divided into \( {k}_{f}+1\) parts.  In the objective function,  the control inputs $a$ and $\omega$ are penalized as $\sum_{k=0}^{k_f-1}  \left\Vert \boldsymbol{u}\left(k\right)\right\Vert_{Q}^{2}$, with $Q={\rm diag}(10, 20)$. Thus, the OCP \eqref{eq2} is formulated as the NLP 
{\allowdisplaybreaks
\begin{subequations}\label{eq35}
\begin{align}
\underset {\boldsymbol{\xi}, \boldsymbol{u},(\cdot)}{\text{min}} & \ \sum_{k=0}^{k_f-1} \left\Vert \boldsymbol{u}\left(k\right)\right\Vert_{Q}^{2} \label{eq35a}\\
\text {s.t.}  \ \ & \boldsymbol{\xi}(0)=\boldsymbol{\xi_{\tx{init}}},\ \boldsymbol{\xi}(k_\tx{f})=\boldsymbol{\xi_{\tx{final}}}, \label{eq35b}\\
 & \boldsymbol{\xi}(k+1) =\tilde{f}\left(\boldsymbol{\xi}(k),\boldsymbol{u}(k)\right), \label{eq35c}\\
 & \boldsymbol{\xi}(k) \in \mathcal{X},\ \boldsymbol{u}(k) \in \mathcal{U}, \label{eq35d}\\
 & (\ref{eq12}), (\ref{eq21}), (\ref{eq28}), \label{eq35e}
\end{align}
\end{subequations}}%
where symbol $\tilde{f}$ denotes the discretized symbol of vehicle model \eqref{eq37} with removing state $\theta_2$ related to the trailer. The 4-th order Runge-Kutta method is used to express the state evolution of \eqref{eq35c}.

The 4-th order polynomial function is applied to generate a trajectory as the initial guesses \cite{ref80,ref81} in states $(x,y,\theta)$ . The initial guess for $v$ is a linear function from the initial to the final value. The rest of the states and inputs are set as zero. Let the initial separating hyperplane, which is used to initialize auxiliary variables $\boldsymbol{\lambda}$, $\mu$ in formulation \eqref{eq21}, be initialized with the hyperplane tangent to the polynomial trajectory at each sample. Likewise in the parking case, let the hyperplane, which is to initialize $\boldsymbol{\lambda}$, $\mu$ in formulation \eqref{eq12}, be orthogonal to the line crossing the sample on the polynomial trajectory and the centroid of an obstacle, and $\gamma=0.5$ is set. In this overtaking case,  the horizon time is set to $\SI{21}{s}$ and $k_\tx{f}=40$. Other NLP configurations are the same as that of the parking case.

\subsubsection{Solution}
The resulting optimal trajectory and states are visually interpreted in Fig.~\ref{fig2}. The primary observation in Fig.~\ref{fig2}(a) and Fig.~\ref{fig2}(c) is that the resulting optimal paths are smooth and collision-free. In Fig.~\ref{fig2}(a) derived from Scenario A, since a higher speed is acquired, a larger turning radius of the car is needed. Because this curved lane is narrow, it can be seen that the vehicle keeps on close to the environment boundaries when driving during [\SI{40}{m},\SI{100}{m}], while each vertex of the vehicle does not cross the boundaries. The proposed containing constraint \eqref{eq28} contributes to this. On the contrary, in Fig.~\ref{fig2}(c) from Scenario B, the vehicle mostly drives along the centerline, not keeping close to the environment boundaries. Additionally, in these two scenarios, the vehicle completes the overtaking task subject to a given time, not colliding with the moving obstacle $\mathcal O_2$, and also during this whole process the vehicle does not enter into obstacle $\mathcal O_2$. The proposed separating constraint \eqref{eq12} contributes to this.

Another observation in Fig.~\ref{fig2}(a)--Fig.~\ref{fig2}(d) is that the resulting optimal paths and states are close to the designed initial guesses, indicating that the given initial guesses can improve the efficiency and solution quality in solving the nonconvex NLP \eqref{eq35}.  Additionally, from the state profiles of Fig.~\ref{fig2}(b) and Fig.~\ref{fig2}(d) , the steering angle, the velocity, and the yaw angle of the vehicle vary steadily during the traveling process and achieve the expected target states. Specifically, in Scenario A the speed increases steadily from the beginning $\SI{25}{km/h}$ to the expected $\SI{30}{km/h}$ while in Scenario B the speed first increases from $\SI{25}{km/h}$ to $\SI{28.4}{km/h}$ and returns to the expected $\SI{25}{km/h}$ in the end.

\subsubsection{Comparisons in initializing auxiliary variables}
To investigate the computational cost of the proposed formulations \eqref{eq12} and \eqref{eq21} when initialized differently, comparative experiments are conducted under existing initialization methods in \cite{ref1} and the proposed geometry-based methods, respectively. The existing method initializes $\boldsymbol{\lambda}$, in formulations \eqref{eq12} and \eqref{eq21} of NLP \eqref{eq35},  with a positive value to prevent a singularity and $\mu$ is set to zero. The proposed method uses an orthogonal hyperplane to initialize the auxiliary variables in formulation \eqref{eq12} and a tangent hyperplane to initialize the auxiliary variables in formulation \eqref{eq21}. Other configurations of NLP \eqref{eq35} remain unchanged. 

Fig.~\ref{fig4} illustrates the proposed geometry-based initial guesses in Scenario A and Scenario B. It can be seen that the designed tangent or orthogonal hyperplane in each sample is in the neighborhood of the resulting separating hyperplane, showing that the auxiliary variables related to constraints \eqref{eq12} and \eqref{eq21} have been well initialized. This is beneficial to solve the nonconvex NLP \eqref{eq35}, which is supported in Table~\ref{tab2} showing the computational results using different initialization methods. From Table~\ref{tab2}, though using different initial guesses the resulting optimal solutions are exactly the same in each scenario, i.e., the objective value is 0.907 and 0.169, respectively. However, the result of solving time shows that the proposed geometry-based guesses lead to the less computation time than the existing guesses, especially when initializing the variables in constraints \eqref{eq12}. The reason behind this is that the proposed geometry-based method can well initialize  variables to be optimized in NLP \eqref{eq35}. The results in Table~\ref{tab2} illustrates the computational advantages of our methods, which are highly beneficial for efficient collision avoidance.  

\begin{table}[htbp]
  \centering
  \caption{Results in computational effort using different initial guesses.}
  \begin{threeparttable}
    \begin{tabular}{cclll}
    \toprule
Scenario & \multicolumn{1}{l}{\parbox{22mm}{Ways to initialize auxiliary  variables  in (12) and (21)}} & \multicolumn{1}{l}{\parbox{10mm}{Objective ~~value}} & TS\tnote{1}    & \multicolumn{1}{l}{\parbox{10mm}{Solving time(s)}}  \\
    \midrule
 \multirow{4}[2]{*}{Scenario A } &  \parbox{22mm}{Proposed to (12) \& Proposed to  (21)} & 0.907 & 0.168 & \textbf{1.362}\tnote{2} \\
   & \parbox{22mm}{Existing to (12) \& Proposed to (21)}  & 0.907 & 0.168 & 8.314 \\
  & \parbox{22mm}{ Proposed to (12) \& Existing to (21)} & 0.907 & 0.168 & 2.132 \\
    \midrule
  \multirow{4}[2]{*}{Scenario B } & \parbox{22mm}{Proposed to (12) \& Proposed to  (21)} & 0.169 & 0.410 & \textbf{0.962} \\
   & \parbox{22mm}{Existing to (12) \& Proposed to (21)}  & 0.169 & 0.410 & 4.717 \\
  & \parbox{22mm}{ Proposed to (12) \& Existing to (21)} & 0.169 & 0.410 & 1.763   \\
    \bottomrule
    \end{tabular}%
       \begin{tablenotes}   
        \footnotesize    
        \item[1] The value $ \tx{TS}=\sum_{{k}=0}^{{k}_f-1} \left(a(k)^2+\omega(k)^2\right)$ is calculated to double check if the NLP \eqref{eq35} converges to the same local solution under a same objective value.
        \item[2] The bold value denotes the least solving time.     
      \end{tablenotes}
    \end{threeparttable}
  \label{tab2}%
\end{table}%

\section{Conclusion and future work}\label{sec8}
This article investigates optimization-based methods aiming to achieve exact collision avoidance of controlled objects. In the proposed unified planning framework, we formulate smooth collision avoidance constraints by utilizing the hyperplane separation theorem to separate a pair of convex sets, and S-procedure and geometrical methods to contain a convex set inside another convex set. The proposed collision avoidance formulations are highly beneficial for reducing an NLP problem size and efficient initialization of auxiliary variables, which admits an improved computational performance compared to the state of the art. 

Based on the proposed planning framework, the next work will first focus on the research of the uncertainty from localization errors, system delay, etc. Additionally, since collision formulations using dual methods in \cite{ref3} (with higher complexity than the proposed collision formulations using the hyperplane separation theorem) have successfully been implemented in real-time \cite{ref32} and even in real-world scenarios \cite{ref66, ref67}, it is possible to execute the proposed methods in real-time by sequential quadratic programming implementations \cite{ref68, ref69}. Real-time control implementation of the proposed method will also be considered in future studies. 

{\appendices
\section{Extensions of separating constraints}
\subsection{Hyperplane with bounded slope or displacement}\label{appendix1}
In Proposition~\ref{prop2}, Proposition~\ref{prop3}, and Proposition~\ref{prop4}, we propose using $N+1$ auxiliary variables to describe a separating hyperplane. The three propositions are sufficient and necessary conditions of $\mathcal B_m \cap \mathcal O_n=\emptyset$.  In principle, it is possible to use fewer variables to obtain sufficient conditions of $\mathcal B_m \cap \mathcal O_n=\emptyset$. For instance, consider Proposition~\ref{prop2} under $N=2$. According to the proposition, 3 auxiliary variables are needed to construct a hyperplane as
\begin{equation}\label{eq60}
\begin{aligned}
\mathcal B_m \cap \mathcal O_n=\emptyset \iff  \exists \boldsymbol{\lambda} \in \mathbb{R}^2 , \mu \in \mathbb{R}: \ \ \ \ \ \ \ \ \  \ \ \ \   \\ 
  \begin{array}{c}
\boldsymbol{\lambda}^\top V_{P_1} \geq \mu \boldsymbol{1}^\top, \boldsymbol{\lambda}^\top V_{P_2} \leq \mu \boldsymbol{1}^\top, \left\|\boldsymbol{\lambda}\right\| > 0.
\end{array}
\end{aligned}
\end{equation}

In certain cases, it is possible to use only 2 auxiliary variables, e.g., if it is known that $\mu\neq0$, the inequalities can be normalized by $\mu$ and a sufficient condition of $\mathcal B_m \cap \mathcal O_n=\emptyset$ can be formulated as
\begin{equation}\label{eq61}
\begin{aligned}
\mathcal B_m \cap \mathcal O_n=\emptyset \ \Longleftarrow \ \exists \boldsymbol{\lambda} \in \mathbb{R}^2: \ \ \ \ \ \ \ \ \  \ \ \ \  \ \ \ \  \ \ \ \ \  \ \ \  \\ 
  \begin{array}{c}
\boldsymbol{\lambda}^\top V_{P_1} \geq \boldsymbol{1}^\top, \boldsymbol{\lambda}^\top V_{P_2} \leq \boldsymbol{1}^\top, \left\|\boldsymbol{\lambda}\right\| > 0.
\end{array}
\end{aligned}
\end{equation}
Alternatively, if it is known that both horizontal and vertical hyperplane slopes are not occurring in the same problem, then one of the $\boldsymbol{\lambda}$ is not zero and the inequality can be normalized with that value. For e.g., such a condition can be written as
\begin{equation}\label{eq62}
\begin{aligned}
\mathcal B_m \cap \mathcal O_n=\emptyset \ \Longleftarrow \ \exists \lambda \in \mathbb{R}, \mu \in \mathbb{R}: \ \ \ \ \ \ \ \ \  \ \ \ \  \ \ \ \ \ \ \ \\ 
  \begin{array}{c}
\begin{bmatrix} \lambda & 1
\end{bmatrix} V_{P_1} \geq \mu\boldsymbol{1}^\top,\begin{bmatrix} \lambda & 1
\end{bmatrix} V_{P_2} \leq \mu\boldsymbol{1}^\top, \lambda^2 > 0.
\end{array}
\end{aligned}
\end{equation}
 
Thus, using fewer variables can simplify the Proposition~\ref{prop2}, Proposition~\ref{prop3}, and Proposition~\ref{prop4}, which can further reduce problem size and computational load.

\subsection{A point-mass controlled object}\label{appendix2}
Apart from the investigation of collision avoidance between polytopes and ellipsoids in this article, the hyperplane separation theorem can readily be applied to other compact sets. For instance, sometimes the controlled object $\mathcal B$ is regarded as a point-mass system. Let the position of a point $p$ be denoted as $\boldsymbol{p} \in \mathbb R ^N$. Let set $\mathcal O_n$ be denoted using polytope $\mathcal P(P, \boldsymbol p, \boldsymbol s)$ as \eqref{eq3}. Then, 
\begin{equation}\label{eq50}
\begin{aligned}
p \cap \mathcal O_n=\emptyset \iff  \exists \boldsymbol{\lambda} \in \mathbb{R}^N , \mu \in \mathbb{R}: \ \ \ \ \ \ \ \ \  \ \ \ \   \\ 
  \begin{array}{c}
\boldsymbol{\lambda}^\top \boldsymbol{p} \geq \mu, \boldsymbol{\lambda}^\top V_P \leq \mu \boldsymbol{1}^\top, \left\|\boldsymbol{\lambda}\right\| > 0.
\end{array}
\end{aligned}
\end{equation}
In formulation \eqref{eq50}, the number of additional variables i.e., $\boldsymbol{\lambda}, \mu$, does not depend on the complexity of polytope $\mathcal O_n$, so formulation \eqref{eq50} also admits a reduced problem size compared to formulations in Proposition 1 of \cite{ref3} and Section III-A-1 of \cite{ref6}.

It is possible to even further reduce the number of variables to formulate \eqref{eq50}. The hyperplane for separation can be attached to the vehicle point
$p$, i.e., $\boldsymbol{\lambda}^\top \boldsymbol{p} = \mu$, and a sufficient condition of $p \cap \mathcal O_n=\emptyset$ can be formulated as
\begin{equation}\label{eq51}
\begin{aligned}
p \cap \mathcal O_n=\emptyset \ \Longleftarrow \ & \ \exists \boldsymbol{\lambda} \in \mathbb{R}^N:\\
 & \begin{array}{c}
\boldsymbol{\lambda}^\top V_P \leq \boldsymbol{\lambda}^\top \boldsymbol{p} \boldsymbol{1}^\top, \left\|\boldsymbol{\lambda}\right\| > 0.
\end{array}
\end{aligned}
\end{equation}

}
\section*{Acknowledgments}
The authors thank Dr. Jian Zhou from Link\"oping University for valuable suggestions.

\vfill

\end{document}